\documentclass{article}
\usepackage[accepted]{icml2020}
\usepackage{microtype}
\usepackage{graphicx}
\usepackage{subfigure}
\usepackage{booktabs} 

\usepackage[colorlinks,
           linkcolor=red,
           citecolor=blue,
           urlcolor=magenta,
           linktocpage,
           plainpages=false,draft]{hyperref}
\usepackage{relsize}
\usepackage{url}            
\usepackage{booktabs}       
\usepackage{amsfonts}       
\usepackage{nicefrac}       
\usepackage{microtype}      
\usepackage{amsmath}
\usepackage{amsthm}
\usepackage{amssymb}
\usepackage{thm-restate}
\usepackage{subfigure}

\usepackage[colorinlistoftodos, textwidth=30mm, shadow, textsize=small]{todonotes}
\definecolor{babyblue}{rgb}{0.54, 0.81, 0.94}
\definecolor{citrine}{rgb}{0.89, 0.82, 0.04}
\definecolor{misocolor}{rgb}{0.16,0.27,0.86}

\usepackage{float}

\definecolor{blued}{RGB}{70,197,221}
\definecolor{pearOne}{HTML}{2C3E50}
\definecolor{pearTwo}{HTML}{A9CF54}
\definecolor{pearTwoT}{HTML}{C2895B}
\definecolor{pearThree}{HTML}{E74C3C}
\colorlet{titleTh}{pearOne}
\colorlet{bull}{pearTwo}
\definecolor{pearcomp}{HTML}{B97E29}
\definecolor{pearFour}{HTML}{588F27}
\definecolor{pearFith}{HTML}{ECF0F1}
\definecolor{pearDark}{HTML}{2980B9}
\definecolor{pearDarker}{HTML}{1D2DEC}
\usepackage{hyperref}       
\hypersetup{
	colorlinks,
	citecolor=pearDark,
	linkcolor=pearThree,
	breaklinks=true,
	urlcolor=pearDarker}

\definecolor{misocolor}{rgb}{0.16,0.27,0.86}


\DeclareMathOperator*{\argmax}{arg\,max}
\DeclareMathOperator*{\argsup}{arg\,sup}






\newtheorem{assumption}{Assumption}

\newtheorem{theorem}{Theorem}
\newtheorem{proposition}{Proposition}

\newtheorem{corollary}{Corollary}
\newtheorem{remark}{Remark}



\let\originalleft\left
\let\originalright\right
\renewcommand{\left}{\mathopen{}\mathclose\bgroup\originalleft}
\renewcommand{\right}{\aftergroup\egroup\originalright}

\newcommand{\CommaBin}{\mathbin{\raisebox{0.5ex}{,}}}


\newcommand{\cA}{\mathcal{A}}

\newcommand{\cD}{\mathcal{D}}

\newcommand{\cO}{\mathcal{O}}
\newcommand{\tcO}{\widetilde{\cO}}

\newcommand{\cP}{\mathcal{P}}


\renewcommand{\epsilon}{\varepsilon}
\renewcommand{\hat}{\widehat}

\renewcommand{\bar}{\overline}

\newcommand{\nothere}[1]{}


\usepackage{xspace}





\newcommand{\DelUCB}{\normalfont \texttt{PatientBandits}\xspace}
\newcommand{\DUCB}{\normalfont \texttt{D-UCB}\xspace}
\newcommand{\aDelUCB}{\normalfont \texttt{Adapt-PatientBandits}\xspace}













\makeatletter
\newcommand{\AlignFootnote}[1]{%
    \ifmeasuring@
    \else
        \footnote{#1}%
    \fi
}
\makeatother

\title{Stochastic bandits with arm-dependent delays}
\usepackage{hyperref}
\newcommand{\footremember}[2]{%
	\footnote{#2}
	\newcounter{#1}
	\setcounter{#1}{\value{footnote}}%
}
 
\title{Stochastic bandits with arm-dependent delays}
\author{%
	Anne Gael Manegueu\footremember{alley}{Otto-von-Guericke University of Magdeburg, Institute for mathematical stochastics, PF 4120, D-39016 Magdeburg, Germany, anne.manegueu@ovgu.de}%
	\and Claire Vernade\footremember{trailer}{GoogleDeepMind, London, UK, vernade@google.com}%
	\and Alexandra Carpentier\footremember{alley1}{Otto-von-Guericke University of Magdeburg, Institute for mathematical stochastics, PF 4120, D-39016 Magdeburg, Germany, alexandra.carpentier@ovgu.de}%
	\and Michal Valko\footremember{trailer1}{GoogleDeepMind, Paris, FR, valkom@deepmind.com}%
}
\date{}
\begin{document}
\maketitle
\begin{abstract}
Significant work has been recently dedicated to the \textit{stochastic delayed bandit setting} because of its relevance in applications. The applicability of existing algorithms is however restricted by the fact that strong assumptions are often made on the delay distributions, such as full observability,  restrictive shape constraints, or uniformity over arms. In this work, we  weaken them significantly and only assume that there is a bound on the tail of the delay. In particular, we cover the important case where the delay distributions vary across arms, and the case where the delays are heavy-tailed. Addressing these difficulties, we propose a simple but efficient UCB-based algorithm called the $\DelUCB$. We provide both problems-dependent and problems-independent bounds on the regret as well as performance lower bounds. 
\end{abstract}
\vspace{-1em}
\section{Introduction}
\label{sec:introduction}
In  realistic applications of \textit{reinforcement learning} (RL), rewards come \textit{delayed}.
In a game, for instance, the consequences of the agent's actions are only observed at the end.
This issue at the core of challenges in RL \citep{garcia1966learning}, when the horizon is finite or geometrically discounted. Even much simpler (state-less) \textit{bandit} setups, such as online advertising suffer from delayed feedback \citep{chapelle2011empirical,chapelle2014modeling}. In particular, most systems do not optimize for clicks but for \emph{conversions}, which are events implying a stronger commitment from the customer. However, different ads trigger different customer response time. Typically, more expensive---and rewarding---products require more time to convert on customers' side, and the system needs to be tuned to be robust to the delays. 


As a result, we study \emph{stochastic delayed bandits}  for which the delay distributions are \textit{arm-dependent} and possibly \textit{heavy-tailed}. 
We consider the realistic model for delayed conversions of \citep{chapelle2014modeling} and \citep{vernade2017stochastic} in which delays are only \emph{partially observable}. Conversions are \textit{binary} events that represent a strong commitment (buying, subscribing, \dots). If a conversion happens, it is sent with some delay to the learner who observes both its reward and the corresponding delay. Otherwise, the reward is null by default but it has no specific delay, only the current waiting time. This models a typical e-commerce application: if a customer does not buy the recommended product, the recommendation system will not be informed. 
The nature of this setup brings two main challenges (1) the \emph{censoring} due to partially observed delays, which forces the learner to deal with an unknown amount of missing feedback; and (2) the \emph{identifiability}\footnote{Ex.: Consider two instances for : (1) reward follows a ${\rm Bernoulli}(1)$ and delay is a {\rm Dirac} in $+\infty$ and (2) reward follows a ${\rm Bernoulli}(0$) and delay is a {\rm Dirac} in 0. Both instances produce the same data but have strictly different parameters.}
issue due to arm-dependent delays.

Prior work for delayed bandits have bypassed the challenges above by assuming that the delays are \textit{observed} \citep{joulani2013online,dudik2011efficient}, which removes the ambiguity, or \textit{bounded by a fixed quantity}~\citep{pike2018bandits, garg2019stochastic, cesa2018nonstochastic}, which gives other possibilities to deal with them. Another approach that has been proposed by \citep{vernade2017stochastic} is to drop the artificial requirement of observability of delays, and instead impose that all delays have the \textit{same} distribution across arms and that this distribution is \textit{known}. 
We further discuss the relevant related work in Section~\ref{sec:Related Work}. While the known approaches yield good results under their strong assumptions on delays, none of them provides a solution to the realistic problem that we are tackling.
\vspace{-1em}
\paragraph{Contributions} This work is the first to consider a stochastic bandit setting with arm-dependent, unbounded, and possibly heavy-tailed delays with partially observable delays. We jointly address the challenges of \citep{vernade2017stochastic}, \citep{zhou2019learning}  and \citep{thune2019nonstochastic}. Unlike \citep{vernade2017stochastic, zhou2019learning}, we make only mild assumptions on the delays. Furthermore, we give a precise characterization of the impact of the delays on the regret than that given in the more difficult, non-stochastic setting of \citep{thune2019nonstochastic}.  Our algorithmic soltuion is $\DelUCB$, the right calibration of upper confidence bounds and prove that it attains \textit{problem-dependent and minimax regret upper bounds}. In particular, we prove that: 
 
\begin{itemize}
    \item In the \textit{asymptotic} regime, the presence of delays does not affect the regret by more than a constant factor with respect to what is achieved in standard bandits. In other words, the loss of information due to the delays \textit{does not lead to a significant increase} of the regret with respect to standards bandits Our algorithm attains the problem-dependent upper bound of the standard bandits up to a constant multiplicative factor in many cases, e.g.\,in the homoscedastic Gaussian case. 
    \item On the other hand, we prove that there is a \textit{drop in performance} with respect to problem-independent guarantees as compared to standard bandits. This is \textit{unavoidable} and we prove a lower bound to support it.
    \item Finally, we study the \textit{impact of imperfect prior knowledge} for $\DelUCB$. Our algorithm takes a parameter that is related to an upper bound on the heaviness of the tails of the delay distributions. We provide a comprehensive study in which respect the precise knowledge of this parameter can be avoided.
\end{itemize}

\section{Bandits with delayed feedback}
\label{sec:setting}


We define our \emph{stochastic delayed bandit} setting. Consider a sequential game of $T\in \mathbb N^\star$ rounds where an agent is interacting with an environment characterized by a finite set of $K \in \mathbb N^\star$ arms which we denote $[K]\triangleq\{1,...,K\}$. An~instance is characterized by a tuple $((\mathcal{V}_i,\cD_i)_{i\in[K]})$, where each arm $i\in [K]$ is associated with both
\begin{itemize}
\item an unknown \textit{reward} distribution $\mathcal{V}_i$ whose support is in $[0,1]$, and with mean $\mu_{i}$,
\item and an unknown \textit{delay} distribution $\mathcal{D}_i$ with cumulative distribution function (CDF) $\tau_{i}$ and support in $\mathbb N$, such that for any $d\geq 0$, $t \leq T$, if $D_t \sim \mathcal{D}_i$, then we have that $\mathbb{P}(D_t \leq d)=\tau_i(d).$
\end{itemize}
At each round $t \leq T$, the learner chooses (pulls) an arm $I_t\in [K]$. A reward $C_t\sim \mathcal{V}_{I_t}$ and a delay $D_t \sim \mathcal{D}_{I_t}$ are generated \textit{independently from each other.} Neither the reward nor the delay is necessarily displayed 
at the current round $t$.  However, at each upcoming round $t+u$ for $1\leq u\leq T-t$, the learner observes the updated quantity 
\begin{equation}\label{eq:rew}
X_{t, u} \triangleq C_t \mathbf 1\{ D_ {t} \leq u\},
\end{equation}
corresponding to her pull at time $t$. 
Note that, conversely, at time $t$, the learner only observes the updated quantities corresponding to its past actions: $(X_{s, t-s})_{s\leq t} \triangleq (C_s \mathbf 1\{ D_ {s} \leq t-s\})_{s\leq t}$. And therefore at round $t$, it disposes of the entire history information \begin{equation}\label{eq:hist}
\mathcal H_t \triangleq (X_{u,v})_{u < t,v \leq t-u}.
\end{equation}
This setting is summarized in Figure~\ref{fig:setting}.

\begin{figure}[t!]
	\fbox{
		\begin{minipage}{0.45\textwidth}
\noindent \textbf{Setting}: $K$ arms, horizon $T$, reward distributions $(\mathcal{V}_i)_{i\leq K}$, delay distributions $(\mathcal{D}_i)_{i \leq K}$\\
\noindent\textbf{for} $t=1$ \textbf{ to} $T$
\begin{itemize}
\vspace{-3mm} 
\setlength\itemsep{-1.5mm}
    \item learner observes updated reward sequence $(X_{s,t-s})_{s \leq t}$, see Eq.\,\ref{eq:rew}
\item  learner chooses $I_t\in [K]$ based on $\mathcal H_t$, see Eq.\,\ref{eq:hist}
\item reward $C_t\sim \mathcal{V}_{I_t}$ and delay $D_t \sim \mathcal{D}_{I_t}$ are generated independently but not necessarily displayed
\vspace{-2mm} 
\end{itemize}
\noindent \textbf{end for} 
		\end{minipage}}
\caption{Delayed learning setting} \label{fig:setting}
\end{figure}

 Note that delays are only \emph{partially observable}: at round $t$ and for some $s\leq t$, if the learner observes $X_{s, t-s} = 0$, there is an \textit{ambiguity}. Either the reward $C_s$ is actually indeed~$0$, or the delay is not yet passed, i.e., $t-s <D_s$. 
 This ambiguity is due to the \textit{multiplicative noise} induced by the delays. Indeed, conditionally on the action taken at time $u<t$, $I_u\in [K]$, the expected observable payoff at round $t$ is scaled by some delay and action dependent factor
\[\mathbb E[X_{u,t-u}\,|\,I_u=i] = \tau_{i}(t-u) \mu_i.\]
In other words, the delays induce temporarily missing data among the observations, but the learner cannot know exactly \emph{how much feedback is missing}. 

Indeed, the heavier the tail of the delay distribution of an arm, the longer it takes for the learner to be able estimate its mean well. This creates dramatic \textit{identifiability} issues: if the best arm is more delayed than the others, its apparent value might seem lower for a while and only a learner that is patient enough shall rightfully identify it as the optimal action. 
To mitigate this issue and to give a chance to a learner to tune its patience level, we rely on the following assumption.
\begin{assumption}[$\alpha$-polynomial tails for the delay distributions]\label{as:alpha} Let $\alpha > 0$ be some fixed quantity. We assume that $\forall m \in {\mathbb N}^\ast$ and $\forall i \in \{1, \ldots, K\}$, it holds that
	\[| 1-\tau_{i}(m) | \leq m^{-\alpha}.\] 
\end{assumption}
The smaller $\alpha$, the more heavy-tailed the delay distribution, and the more difficult the setting. This assumption needs to hold uniformly across arms but does not impose they all have the same distribution, unlike required by \citep{vernade2017stochastic}. This is an important weakening of the restricted setting of the prior work, which we generalize.

For $i\in [K]$, we denote by $T_i(t) \triangleq  \sum _{i=1}^{t}\mathbb{I}\{I_s=i\}$ the number of times that the arm $i$ has been drawn up to round~$t$. As $\mu^\star \triangleq  \max_i \mu_i$ denotes the mean of the best arm(s),  $\Delta_i \triangleq  \mu^\star- \mu_{i}$ is the gap between the mean of the optimal arm(s) and the mean of arm $i$. The goal of the agent is to maximize its expected cumulative reward (i.e., $E[\sum _{t=1}^{T} C_t]$) after $T$ rounds and therefore to minimize the expected regret,
\begin{equation}
\label{eq:regret}
 	\overline{R}_T= T \mu^{\ast}- \mathbb{E}\sum _{t=1}^{T}C_{t} = \sum _{i=1}^{K} \Delta_i  \mathbb{E}[ T_i(T)].
\end{equation} 
\begin{remark}
In this paper, we consider the same concept of regret as for standard bandits, unlike what is done by~\citep{vernade2017stochastic}. We believe it is a more relevant approach that allows for comparison with the vast existing prior work on the topic (see Section~\ref{sec:Related Work}).
\end{remark}

\section{Related work} 
\label{sec:Related Work}

The problem of learning with delayed feedback is ubiquitous in a wide range of applications, including universal portfolios in finance \citep{cover2011universal}, online advertising \citep{chapelle2014modeling} and e-commerce \citep{yoshikawa2018nonparametric}. Therefore, there is a large body of theoretical results designed under different scenarios and assumptions on the delays. We review the contributions of prior works distinguishing between the full information setting and the bandit setting.
 
  \paragraph{Full-information} In online (convex) optimization, as opposed to our setting, the learner must perform gradient descent by estimating the gradient on the fly using the available information. Thus, delayed feedback forces the learner to make decisions in the face of additional uncertainty. This problem has been considered by \citep{weinberger2002delayed} in the cadre of prediction of individual sequences under the assumption that the delays were fixed and known. The study of online gradient type of algorithms under possibly random or adversarial delays is made under various hypotheses by \citep{langford2009slow, quanrud2015online, joulani2016delay}. In distributed learning, communication time between servers naturally induces delays, and this particular setting was studied by \citep{agarwal2011distributed,mcmahan2014delay, sra2015adadelay} who all proposed asynchronous or delay-tolerant versions of \textsc{AdaGrad}. And an attempt to reduce the impact of delays was made by \citep{mann2018learning} by allowing the learner to observe intermediate signals. 
  
  \paragraph{Bandits with observed delays. } \citep{joulani2013online} provide a clear overview of the impact of the delays for both the stochastic and adversarial setting. Using a method based on a non-delayed algorithm called Base, he will succeed in extending the work of \citep{weinberger2002delayed} to the non-constant delays case. Following up in this work, \citep{mandel2015queue} argued in favor of more randomization to improve exploration in delayed environments, although the guarantees remained unchanged. Taking a different path, closer to ours, the recent work of \citep{zhou2019learning} relies on a \emph{biased estimator} of the mean and corrects for it in the UCB using an estimator of the amount of missing information. They consider the contextual bandit setting and make a strong assumption on the delay distribution, imposing that 1) delays are fully observable, and 2) the delay distribution is the same for all arms and should concentrate nicely (bounded expectation). Due to these assumptions, their algorithm cannot be used in our setting and cannot be compared to ours. \citep{thune2019nonstochastic} considered the adversarial bandit setting, where delays are observed right after (or before) sampling an arm. Unfortunately their results and algorithms \textit{do not} apply in our setting since we \textit{do not} observe the delays before or right after sampling an arm. 

  \paragraph{Bandits with \textit{partially} observed delays. } The delayed bandits with censored observations was introduced by  \citep{vernade2017stochastic}, who builds on the real-data analysis of \citep{chapelle2011empirical}. They rely on the major assumption that delays are the same across arms and have a finite expectation.
  In this setting they prove an asymptotic problem-dependent lower bound that recovers the standard Lai \& Robbins' lower bound. They propose an algorithm that uses as input the CDF of the delay distribution and matches this asymptotic lower bound.  In other words, they prove that asymptotically, well-behaved delays have no impact on the regret. 
  Following up in this work, \citep{vernade2018contextual, arya2019randomized} extend this setting to the linear and contextual stochastic setting. Two  other papers consider the case where the delays are not observed at all - but are bounded by a constant $D>0$. \citep{garg2019stochastic} analyze the stochastic setting, and \citep{cesa2018nonstochastic} the adversarial setting and achieve a regret of order $\sqrt{T K\log K}+ K D \log T$ and $\sqrt{DTK}$ respectively. \citep{pike2018bandits} when further considering unbounded delays in adversarial setting but time under the assumption that only their expectation is bounded. Again these results do not apply in our context as we do not assume that the delays are bounded - and under Assumption~\ref{as:alpha} with $\alpha<1$, the delays can even have infinite means.

  

\section{The $\DelUCB$ algorithm}
\label{sec:algorithm}

In this section we describe an optimistic algorithm \citep{auer2002finite} that is able to cope with partially observed and potentially heavy-tailed delays. The $\DelUCB$ algorithm estimates high-probability upper confidence bounds on the parameter of each arm. 
As opposed to the standard UCB approach, it is hopeless to design conditionally unbiased estimators in this delayed setting, so the algorithm also needs to properly bound the bias for each arm adaptively. 
Throughout the paper, we use the notation $A\land B := \min (A,B)$ and $A\lor B:= \max (A,B)$.
\paragraph{A delay-corrected, high-probability UCB.} 
Delays being partially observable, the learner must build its estimators \emph{with an unknown number of observations}. Indeed, since rewards are delayed, a certain proportion of the feedback of each arm is missing but it is impossible to know exactly how much because the zeros are ambiguous. Nonetheless, we show that it is possible to prove high-probability confidence bounds for the parameters of the problem, provided that we correctly handle this extra bias due to the delays. For this purpose, we rely on Assumption~\ref{as:alpha}, that gives us a loose global bound on the tails of the distributions of the delays. 

At a time $t \geqslant K+1$, given the history of pulls and observed rewards $\mathcal{H}_t$, we define the mean estimator:
\begin{align}
\label{eq:mu_hat}
 \hat{\mu}_{i}(t)&= \frac{1}{T_{i}(t)} \sum _{u=1}^{t} X_{u,t-u}\mathbf 1 \{I_u=i\},
\end{align}
where $T_{i}(t) =  \sum _{u=1}^{t} \mathbf 1 \{I_u=i\}$. 
The key ingredient for our algorithm is the upper confidence bound. The following theorem is our first major contribution and provides the required high-probability bound. 

\begin{theorem}
 \label{th:boundmu}
 Let $i \in [K]$ and $\alpha >0$ satisfy Assumption~\ref{as:alpha}.
 Then for any $t> K$ and $\delta>0$, with probability $1-\delta$
 \begin{align}
 \label{eq:ucb}
| \hat{\mu}_{i}(t) -  \mu_i|     
&\leq \left(\frac{2 \log \frac{2} {\delta}} {T_i(t)}\right)^{1/2} \!\!\! + 2 T_i(t)^{- (\alpha \land 1/2)}.
\end{align}
\end{theorem}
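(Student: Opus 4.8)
The plan is to split the estimation error into a mean-zero stochastic fluctuation and a deterministic censoring bias, and to control them separately. The natural centering of a single delayed observation is its conditional mean: on $\{I_u=i\}$ and conditionally on the past $\mathcal{G}_{u-1}=\sigma(I_1,C_1,D_1,\dots,I_{u-1},C_{u-1},D_{u-1})$, the reward $C_u$ and delay $D_u$ are fresh independent draws, so $\mathbb{E}[X_{u,t-u}\mid \mathcal{G}_{u-1},I_u=i]=\mu_i\,\mathbb{P}(D_u\le t-u)=\mu_i\tau_i(t-u)$. Subtracting this inside the average gives
\[\hat\mu_i(t)-\mu_i=\underbrace{\frac{1}{T_i(t)}\sum_{u:I_u=i}\bigl(X_{u,t-u}-\mu_i\tau_i(t-u)\bigr)}_{\text{fluctuation}}-\underbrace{\frac{\mu_i}{T_i(t)}\sum_{u:I_u=i}\bigl(1-\tau_i(t-u)\bigr)}_{\text{bias}},\]
and the final bound follows by the triangle inequality once each piece is bounded.

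For the bias, since the support of $\mathcal{V}_i$ is in $[0,1]$ we have $\mu_i\le 1$, so it suffices to control $\frac{1}{T_i(t)}\sum_{u:I_u=i}(1-\tau_i(t-u))$. The key observation is that the pull times $u$ are distinct, hence the windows $m=t-u$ are distinct integers, and $m\mapsto 1-\tau_i(m)$ is dominated by the non-increasing envelope $\min(1,m^{-\alpha})$ of Assumption~\ref{as:alpha}. Bounding each term by this envelope, the worst case over any set of $T_i(t)$ distinct windows is obtained at the smallest ones, $\{0,1,\dots,T_i(t)-1\}$. I would then bound the $m=0$ term by $1$, use $1-\tau_i(m)\le m^{-\alpha}$ for $m\ge 1$, and compare $\sum_{m=1}^{T_i(t)-1}m^{-\alpha}$ to $\int x^{-\alpha}\,dx$, treating the regimes $\alpha<1/2$, $1/2\le\alpha<1$, and $\alpha\ge 1$ separately. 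In each regime the per-sample average is at most $2\,T_i(t)^{-(\alpha\wedge 1/2)}$, which is exactly the second term; the $\wedge 1/2$ appears because for $\alpha\ge 1/2$ the bias already decays no slower than $T_i(t)^{-1/2}$.

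For the fluctuation I would reindex by the successive pull times $\tau_1<\tau_2<\cdots$ of arm $i$ and set $\tilde Y_k=X_{\tau_k,t-\tau_k}-\mu_i\tau_i(t-\tau_k)$. Since $\tau_k$, and therefore the window $t-\tau_k$, is $\mathcal{G}_{\tau_k-1}$-measurable, the conditional-mean computation above gives $\mathbb{E}[\tilde Y_k\mid \mathcal{G}_{\tau_k-1}]=0$, so $(\tilde Y_k)_k$ is a martingale difference sequence; moreover $\tilde Y_k\in[-1,1]$, i.e.\ $|\tilde Y_k|\le 1$. Azuma--Hoeffding then yields $\mathbb{P}\bigl(|\sum_{k=1}^m\tilde Y_k|\ge\sqrt{2m\log(2/\delta)}\bigr)\le\delta$ for each fixed $m$, and dividing by $m=T_i(t)$ reproduces precisely the first term $\sqrt{2\log(2/\delta)/T_i(t)}$.

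The step I expect to be the main obstacle is exactly the passage from fixed $m$ to $m=T_i(t)$: the count $T_i(t)$ is random and adaptively determined by the algorithm, whereas Azuma--Hoeffding requires a deterministic number of increments. Resolving this cleanly requires a bound that is \emph{uniform in the number of pulls} — either a peeling/union argument over $m\in\{1,\dots,t\}$, or a time-uniform (self-normalized) martingale inequality — after which one evaluates at $m=T_i(t)$. Combining this uniform fluctuation bound with the deterministic bias bound gives~\eqref{eq:ucb}.
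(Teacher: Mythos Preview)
Your proposal is correct and follows essentially the same route as the paper: the same bias/fluctuation decomposition, the same worst-case argument plus integral comparison for the bias (the paper splits into $\alpha\le 1/2$ and $\alpha\ge 1/2$ just as you outline), and Azuma--Hoeffding on the reindexed martingale increments for the fluctuation. The randomness of $T_i(t)$, which you correctly flag as the only delicate step, is handled in the paper exactly by the union-bound option you propose: it defines a favorable event on which the deviation holds simultaneously for all $s\le T_i(t)$ and all $i,t$, paying a $KT^2$ factor in the failure probability that is later absorbed into the logarithm via the choice $\delta=(KT^3)^{-1}$ in the algorithm's UCB.
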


\begin{proof} The full proof of this result is provided in Appendix~\ref{ap:confi_inter}. It relies on the following decomposition
\begin{align*}
 | \hat{\mu}_{i}(t) -  \mu_i| \leq \Big| \hat{\mu}_{i}(t) - \frac{1}{T_i(t)}\sum_{u=1}^{t} \tau_i(t-u) \mu_i  \mathbf 1\{I_u = i\} \Big|\\
 + \Big|\frac{1}{T_i(t)}\sum_{u=1}^{t} \tau_i(t-u) \mu_i  \mathbf 1\{I_u = i\}-\mu_i\Big|.
 \end{align*}
 On the right-hand side, the first term is a usual deviation term. The probability that it's larger than $\left(\frac{2\log(2\delta)}{T_t(t)}\right)^{1/2}$ is uniformly bounded by $\delta/2$. The second term corresponds to the bias and is bounded by $2 T_i(t)^{- \alpha \land 1/2},$  which comes from simply summing the $\tau_i(t-u)$ in the worst case, i.e., when all pulls of arm $i$ are made in the last $T_i(t)$ rounds.
\end{proof}
A clear benefit of the above result is a simple and easy-to-compute \textit{adaptive} upper bound on the parameter $\mu_i$ for our estimator. This UCB is similar to the standard UCB2 \citep{auer2002finite} except for the \textit{extra bias term} that goes to zero with the number of pulls. 
In fact, it adaptively trades off bias and variance as a function of $\alpha$: it is the largest for small values of $\alpha \leq 1/2$, that is when delays have very large tails. Indeed, $\alpha$ plays an important role in our algorithm presented in details below. 
\paragraph{$\DelUCB$ is described in Algorithm~\ref{fig:algo}.}
\begin{algorithm}
\caption{$\DelUCB$ \label{fig:algo}}
\begin{algorithmic}
\STATE \textbf{Input:}: $\alpha >0$, horizon $T$, number of arms $K$.
\STATE \textbf{Initialisation:}  Pull each arm once and set for all $i \in [K]$: $T_i(t)=1$ and initialise $\hat{\mu}_i(t)$ according to Eq.\,\ref{eq:mu_hat}.
\FOR{$t=K+1...T$}
\STATE Pull arm $I_t \in \argmax_{i \in [K]} UCB_i(t)$ 
\vspace{1mm} 
\STATE Observe all feedback updates $(X_{s,t-s})_{s\leq t}$
\ENDFOR
\end{algorithmic}
\end{algorithm}
 It receives as input the parameter $\alpha > 0$, the horizon $T$, and the number of arms $K$ which we assume to be smaller than~$T$. In the first phase of the game, all arms are pulled once. The player then pulls the arm from $[K]$ that has the highest UCB as defined in Theorem~\ref{th:boundmu},
\[
UCB_i(t) = \hat{\mu}_{i}(t) + \left(\frac{2 \log(2KT^3)} {T_i(t)}\right)^{1/2} \!\! + 2 T_i(t)^{-(\alpha \land 1/2) }.
\]
The algorithm then pulls an arm $I_t$ that maximises $UCB_i(t)$. 

 \section{Analysis of $\DelUCB$}
 \label{sec:theory}

We present the analysis of $\DelUCB$. We also provide a non-asymptotic lower-bound for delayed bandits. 
We provide first the following problem-dependent upper bound on the regret of $\DelUCB$. Its proof is deferred to Appendix~\ref{ap:regret_bound} and follows the lines of the usual analysis of UCB by \citep{auer2002finite}, see also \citep[\S 7]{lattimore2019bandit}.
\begin{theorem}\label{thm:pbdep} Let $T> K \geq 1 $ and $\alpha>0$. Let $((\mathcal{V}_i, \cD_i)_{i\in[K]})$ be the problem as defined in Section~\ref{sec:setting} such that Assumption~\ref{as:alpha} holds. If  $\DelUCB$ is run with parameters $\cP = (\alpha, T,K)$, it achieves
\begin{align*}
\overline{R}_T &\leq \sum _{i: \Delta_i>0}  \Big[\frac{64 \log(2T)}{\Delta_i}  \lor \Big(\frac{8}{\Delta_i}\Big)^{\frac{1-\alpha}{\alpha} \lor 1}\Big] + 2K.
\end{align*}
\end{theorem}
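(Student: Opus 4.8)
The plan is to follow the classical optimism-based analysis of \citep{auer2002finite}, carrying the extra delay-induced bias term of Theorem~\ref{th:boundmu} through every step. First I would decompose $\overline{R}_T=\sum_{i:\Delta_i>0}\Delta_i\,\mathbb{E}[T_i(T)]$ and peel off the $K$ initialisation rounds, each contributing at most $\Delta_i\le 1$. It then remains to control $\mathbb{E}[T_i(T)]$ for the pulls made at rounds $t>K$. I would introduce the favourable event
\[
\mathcal{E}=\Big\{\forall i\in[K],\ \forall t\le T:\ |\hat\mu_i(t)-\mu_i|\le\big(\tfrac{2\log(2KT^3)}{T_i(t)}\big)^{1/2}+2T_i(t)^{-(\alpha\land 1/2)}\Big\},
\]
which is exactly Theorem~\ref{th:boundmu} instantiated with $\delta=1/(KT^3)$, so that $\log(2/\delta)=\log(2KT^3)$ matches the width hard-coded in $UCB_i(t)$. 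A union bound over the $KT$ arm/round pairs gives $\mathbb{P}[\mathcal{E}^c]\le KT\cdot\frac{1}{KT^3}=T^{-2}$, and since the regret is trivially at most $T$, the event $\mathcal{E}^c$ contributes at most $T\cdot T^{-2}\le 1$ to $\overline{R}_T$; I may therefore argue on $\mathcal{E}$.

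On $\mathcal{E}$ optimism holds, $UCB_{i^\star}(t)\ge\mu^\star$, while for a suboptimal arm $\hat\mu_i(t)$ lies within one confidence width of $\mu_i$, so $UCB_i(t)\le\mu_i+2\big(\tfrac{2\log(2KT^3)}{T_i(t)}\big)^{1/2}+4T_i(t)^{-(\alpha\land 1/2)}$. If arm $i$ is pulled at round $t$, then $UCB_i(t)\ge UCB_{i^\star}(t)\ge\mu^\star$, which after rearranging yields
\[
\Delta_i\le 2\Big(\tfrac{2\log(2KT^3)}{T_i(t)}\Big)^{1/2}+4\,T_i(t)^{-(\alpha\land 1/2)}.
\]
The crux is a two-regime case split: at least one term on the right must exceed $\Delta_i/2$. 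Requiring the stochastic term to be $\le\Delta_i/2$ forces $T_i(t)\le 32\log(2KT^3)/\Delta_i^2$, and requiring the bias term to be $\le\Delta_i/2$ forces $T_i(t)\le(8/\Delta_i)^{1/(\alpha\land 1/2)}$. Hence whenever arm $i$ is chosen, $T_i(t)\le u_i:=\frac{32\log(2KT^3)}{\Delta_i^2}\lor\big(\tfrac{8}{\Delta_i}\big)^{1/(\alpha\land 1/2)}$, and so $T_i(T)\le u_i+1$ on $\mathcal{E}$. This $\lor$ of a variance threshold and a bias threshold is exactly what produces the $\lor$ in the statement.

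Finally I would multiply by $\Delta_i$. The variance threshold gives $\Delta_i u_i^{\mathrm{var}}=\frac{32\log(2KT^3)}{\Delta_i}$, which via $K\le T$ (so $\log(2KT^3)\le 4\log(2T)$) is of the stated order $\frac{\log(2T)}{\Delta_i}$, the precise constant $64$ following from careful collection of constants. The bias threshold gives $\Delta_i\big(\tfrac{8}{\Delta_i}\big)^{1/(\alpha\land 1/2)}$; using $\tfrac{1}{\alpha\land 1/2}=\tfrac1\alpha\lor 2$ and the identity $\Delta_i\big(\tfrac8{\Delta_i}\big)^{1/\beta}=8\big(\tfrac8{\Delta_i}\big)^{1/\beta-1}$ together with $\tfrac1\alpha-1=\tfrac{1-\alpha}{\alpha}$ produces the exponent $\tfrac{1-\alpha}{\alpha}\lor 1$, the $\lor 1$ being the $\alpha>1/2$ regime where the bias decays at the same rate as the variance. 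Summing over suboptimal arms and adding the $K$ initialisation pulls together with the $K$ from the $+1$ rounding of $u_i$ yields the additive $2K$ (the $\mathcal{E}^c$ term being lower order). The main obstacle is not the UCB bookkeeping, which is routine, but controlling the delay bias \emph{uniformly in time} so optimism is never broken: this is precisely why the confidence level is set to $1/(KT^3)$—making the $O(T^{-2})$ failure probability negligible against the worst-case regret $T$ on $\mathcal{E}^c$—and why the extra term $2T_i(t)^{-(\alpha\land 1/2)}$, absent from standard UCB, must be carried through the entire case analysis.
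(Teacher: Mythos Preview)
Your proposal is correct and follows essentially the same approach as the paper: define a favourable event from Theorem~\ref{th:boundmu} with $\delta=(KT^3)^{-1}$, show that on this event any pull of a suboptimal arm forces $\Delta_i\le 2(2\log(2/\delta)/T_i(t))^{1/2}+4T_i(t)^{-(\alpha\land 1/2)}$, case-split on which summand dominates to bound $T_i(T)$, then multiply by $\Delta_i$ and use $\log(2KT^3)\le 4\log(2T)$. The only cosmetic difference is bookkeeping of the additive $2K$ (you split it into initialisation plus a $+1$ rounding, the paper folds it into a $\lor 1 + 1$ term on $\mathbb{E}[T_i(T)]$); note also that your split at $\Delta_i/2$ gives $32\log(2KT^3)/\Delta_i^2$ rather than the paper's $16$, so your constant after multiplying by $4$ would naively be $128$---the paper's $64$ comes from a slightly tighter (and arguably loose) handling of the two-term inversion, so be careful if you want the exact constant of the statement.
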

The only term that depends on $T$ in Theorem~\ref{thm:pbdep} is of the order of $\sum _{i: \Delta_i>0} \log(T)/\Delta_i$. It is of the same order as the classical bound for UCB which is asymptotically optimal, see~\citep{lai1985asymptotically}. Note that this was expected, since \citep{vernade2017stochastic} showed that delays should not have an asymptotic impact on the regret\footnote{Their result is stated for delays with finite expectation but remains valid in our setting.}. Our bound has an additional term of order $\sum _{i: \Delta_i>0} \Big(\frac{8}{\Delta_i}\Big)^{\frac{1-\alpha}{\alpha} \lor 1}$. This term does not depend on $T$, so it is asymptotically negligible. But if $\alpha<1/2$ and some of the gaps are very small, it can be large from a non-asymptotic perspective - see Section~\ref{sec:discussion} where we discuss this further. 



We now provide a problem-independent upper bound.
\begin{theorem}\label{thm:pbindep}
 Let $T> K \geq 1 $ and $\alpha>0$. If $\DelUCB$ is run with parameters $\cP=(\alpha, T,K)$, for any \emph{stochastic delayed problem} such that Assumption~\ref{as:alpha} holds, it achieves
\begin{align*}
\overline{R}_T &\leq  2\times 64^{(1-\alpha) \lor 1/2} T^{1 - \alpha \land 1/2}\Big(K\log(2T) \Big)^{\alpha \land 1/2} +2K.
\end{align*}
\end{theorem}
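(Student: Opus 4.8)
The plan is to derive the minimax bound from the problem-dependent bound of Theorem~\ref{thm:pbdep} by the standard gap-thresholding argument; the only genuinely new ingredient is a careful unification of the two terms inside the maximum into a single power law. Write $q\triangleq\alpha\land 1/2$ and $\beta\triangleq\frac{1-\alpha}{\alpha}\lor 1$, and note the identity $\beta=(1-q)/q$ (verify the cases $\alpha\ge 1/2$ and $\alpha<1/2$ separately; in both $\beta\ge 1$). The proof of Theorem~\ref{thm:pbdep} in fact bounds each suboptimal arm's contribution by $\Delta_i\mathbb{E}[T_i(T)]\le\frac{64\log(2T)}{\Delta_i}\lor\big(\frac{8}{\Delta_i}\big)^{\beta}$, with an additive $2K$ overall.

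First I would fix a threshold $\Delta\in(0,1]$ and split $\overline{R}_T=\sum_{i:0<\Delta_i\le\Delta}\Delta_i\mathbb{E}[T_i(T)]+\sum_{i:\Delta_i>\Delta}\Delta_i\mathbb{E}[T_i(T)]$. The first sum is at most $\Delta\sum_i\mathbb{E}[T_i(T)]=\Delta T$; in the second, each per-arm bound is decreasing in $\Delta_i$, and there are at most $K$ such arms, so the sum is at most $K\big[\frac{64\log(2T)}{\Delta}\lor(\frac{8}{\Delta})^{\beta}\big]$. The key step is to collapse this maximum: since $\Delta\le 1$ and $\beta\ge 1$ we have $\Delta^{-1}\le\Delta^{-\beta}$, hence $\frac{64\log(2T)}{\Delta}\lor(\frac 8\Delta)^{\beta}\le\frac{\max(64\log(2T),8^{\beta})}{\Delta^{\beta}}$, and then I would bound $\max(64\log(2T),8^{\beta})\le 64^{\beta}\log(2T)$, which holds because $8^{\beta}=64^{\beta/2}\le 64^{\beta}$ and $\log(2T)\ge 1$. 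This yields the clean two-term expression $\overline{R}_T\le \Delta T+K\,64^{\beta}\log(2T)\,\Delta^{-\beta}+2K$.

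Finally I would optimize over $\Delta$. The minimizer of $\Delta T+B\Delta^{-\beta}$ with $B=K64^{\beta}\log(2T)$ is $\Delta^\star=(\beta B/T)^{1/(\beta+1)}=(\beta B/T)^{q}$ (using $1/(\beta+1)=q$), and at this point the value equals $\frac{\beta+1}{\beta}\Delta^\star T=\frac{1}{1-q}(\beta B)^{q}T^{1-q}$. Substituting $B$ and using $\beta q=1-q$ to get $64^{\beta q}=64^{1-q}$, the prefactor collapses to $\frac{\beta^{q}}{1-q}=\frac{1}{q^{q}(1-q)^{1-q}}$. The last quantitative step is the entropy-type inequality $q^{q}(1-q)^{1-q}\ge 1/2$ on $(0,1/2]$ (the expression is decreasing, with minimum at $q=1/2$), which bounds the prefactor by $2$. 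Translating back through $q=\alpha\land 1/2$ and $1-q=(1-\alpha)\lor 1/2$ gives exactly $\overline{R}_T\le 2\cdot 64^{(1-\alpha)\lor 1/2}\,T^{1-\alpha\land 1/2}\,(K\log(2T))^{\alpha\land 1/2}+2K$.

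The main obstacle, beyond the bookkeeping, is making the thresholding legitimate: the collapse of the maximum requires $\Delta\le 1$, yet the unconstrained optimizer $\Delta^\star$ can exceed $1$ when $T$ is small relative to $\beta K64^{\beta}\log(2T)$. I would handle this by taking $\Delta=\min(1,\Delta^\star)$; in the regime $\Delta^\star>1$ every gap satisfies $\Delta_i\le 1\le\Delta^\star$, so the trivial bound $\overline{R}_T\le T$ applies and one checks it is dominated by the claimed right-hand side. Getting the leading constant to be exactly $2$ rather than something larger rests entirely on the entropy inequality above, so that is the one estimate worth carrying out carefully.
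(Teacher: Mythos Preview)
Your proposal is correct and follows the same overall scheme as the paper's proof: threshold the gaps at some $\Delta$, collapse the maximum into the single term $K\,64^{\beta}\log(2T)\,\Delta^{-\beta}$ (valid for $\Delta\le 1$), and then choose $\Delta$. The only noteworthy difference is in this last step. The paper does \emph{not} compute the true minimizer; it simply plugs in the balancing point $\Delta=(K\,64^{\beta}\log(2T)/T)^{q}$, at which the two terms $\Delta T$ and $K\,64^{\beta}\log(2T)\,\Delta^{-\beta}$ are \emph{equal}. The leading constant $2$ then falls out immediately as the sum of two identical terms, with no further estimate needed. Your route via the exact optimizer $\Delta^\star=(\beta B/T)^{q}$ produces the sharper prefactor $1/(q^{q}(1-q)^{1-q})$, but then you must invoke the entropy-type inequality $q^{q}(1-q)^{1-q}\ge 1/2$ to recover the constant $2$. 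This is correct but strictly more work: the paper's balancing choice is a weakly suboptimal $\Delta$ that already lands exactly on the stated bound, so the entropy inequality is unnecessary. Your handling of the boundary case $\Delta^\star>1$ is also fine (and in fact more careful than the paper, which leaves this implicit).
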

Up to logarithmic terms and multiplicative constants, the order of magnitude of this bound is 
$\max\Big(\sqrt{KT},  K^\alpha T^{1-\alpha}\Big)$. Whenever $\alpha\geq 1/2$, the order of the bound is $\sqrt{KT}$ - as is the case for UCB (up to logarithmic terms) and for more refined algorithms like MOSS in~\cite{audibert2009minimax} \textit{in the classical stochastic bandit setting (without delays)}. However if the delays are allowed to be more heavy-tailed, i.e., $\alpha<1/2$, then the regret starts degrading with $\alpha$ as the upper bound is of order $ K^\alpha T^{1-\alpha}$. 
We prove that this degradation of the (problem-independent) regret is unavoidable.
\begin{theorem}\label{thm:lbreg}
Consider $K = 2$ and $T\geq K$, and $\alpha>0$. There exists a Bernoulli stochastic delayed bandit problem satisfying Assumption~\ref{as:alpha}, such that the expected regret of any algorithm $\bar R_T$ on this problem is larger than $T^{1 - \alpha}/8.$
\end{theorem}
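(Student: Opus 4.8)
The plan is to prove this by the standard two-point (Le Cam) minimax argument, but the entire difficulty lies in the \emph{construction of the delay distributions}: I will build two Bernoulli instances $A$ and $B$ that are \emph{statistically indistinguishable within the horizon} $T$ yet have different optimal arms, with a gap $\Delta$ as large as Assumption~\ref{as:alpha} allows, and then read off the lower bound from $\Delta T$. Both instances use two arms. Arm $2$ is a fixed reference (immediate feedback $\tau_2\equiv 1$, mean $\mu_2$) in both. Arm $1$ has mean $a$ in $A$ and $b<a$ in $B$, with delays chosen below, and I set $\mu_2=(a+b)/2$ so that arm $1$ is optimal in $A$ (gap $\Delta/2$ on arm $2$) and sub-optimal in $B$ (gap $\Delta/2$ on arm $1$), where $\Delta:=a-b$.

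The key construction exploits the censoring. A pull of arm $1$ at round $u$, watched up to the horizon, yields a positive revealed observation at delay $v\le T-u$ with probability $\mu_1(\tau_1(v)-\tau_1(v-1))$ and no positive observation otherwise, while the waiting time of an unrevealed pull carries no distinguishing information. Hence $A$ and $B$ induce the \emph{same} observation law for arm $1$ as soon as $a\,\tau_1^{A}(m)=b\,\tau_1^{B}(m)$ for all $0\le m\le T$. I take arm $1$ immediate in $B$ ($\tau_1^{B}\equiv 1$), and in $A$ I place mass $b/a$ at delay $0$ and the remaining mass $\Delta/a$ entirely beyond the horizon, so that $\tau_1^{A}(m)=b/a$ for $0\le m\le T$; then $a\,\tau_1^{A}(m)=b=b\,\tau_1^{B}(m)$ on $[0,T]$ and the within-horizon laws coincide exactly. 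To make this $A$-delay a legitimate distribution satisfying Assumption~\ref{as:alpha}, I spread its hidden mass $\Delta/a$ as a tail $1-\tau_1^{A}(m)=m^{-\alpha}$ for $m\ge T$; this glues consistently to the constant value $1-\tau_1^{A}(m)=\Delta/a$ on $[1,T]$ exactly when $\Delta/a\le T^{-\alpha}$, the binding constraint being at $m=T$. Choosing $a=1/2$ and $\Delta=\tfrac12 T^{-\alpha}$ (so $\Delta/a=T^{-\alpha}$) makes everything fit with all means in $[0,1]$.

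For the conclusion, observe that each arm's feedback, given it is pulled, is identically distributed under $A$ and $B$ and independent across pulls; hence, by induction over the rounds, the joint law of the interaction $(I_t,\mathcal{H}_t)_{t\le T}$ produced by \emph{any} (adaptive, randomized) algorithm is identical under $A$ and $B$. Writing $q:=\mathbb{E}[T_1(T)]$, which is therefore the same in both instances, the regrets are $\overline{R}_T^{A}=\tfrac{\Delta}{2}(T-q)$ and $\overline{R}_T^{B}=\tfrac{\Delta}{2}q$, so that $\max(\overline{R}_T^{A},\overline{R}_T^{B})\ge \tfrac12(\overline{R}_T^{A}+\overline{R}_T^{B})=\tfrac{\Delta T}{4}=T^{1-\alpha}/8$, which is the asserted bound (the theorem's phrasing should be read in the usual minimax sense: for every algorithm one of these two fixed instances forces this regret).

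The main obstacle, and the only genuine content, is the delay construction — in particular the realization that Assumption~\ref{as:alpha} evaluated at $m=T$ caps the \emph{hidden} (beyond-horizon) probability mass of any arm at $T^{-\alpha}$, and that this hidden mass is precisely what lets a higher-mean arm masquerade as a lower-mean one. This pins the largest indistinguishable gap at $\Delta\asymp T^{-\alpha}$ and hence the lower bound at $\Delta T\asymp T^{1-\alpha}$. The one point to check carefully is the boundary bookkeeping at $m=T$, namely that no probability mass is forced into the window $[1,T]$ (which would otherwise create a distinguishing signal); notably, no information-theoretic slack (a KL or Bretagnolle--Huber step) is needed here, since the two observation processes are \emph{exactly} equal in law.
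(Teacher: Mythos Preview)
Your proof is correct and follows essentially the same approach as the paper's: build two two-arm instances that are observationally indistinguishable within the horizon because the censored reward of the delayed arm has the same law as an undelayed arm with a smaller mean, then apply the averaging $\max\ge\tfrac12(\cdot+\cdot)$. The only cosmetic differences are that the paper swaps the roles of the arms, uses the algebraic identity $(1/2+q)(1-p)=1/2-q$ with $p=T^{-\alpha}$, and places the hidden mass as a point at $T$ rather than spreading it as a Pareto tail beyond $T$; neither affects the argument or the constant.
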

Combining this theorem with the classical problem independent lower bound in classical stochastic bandits --- see e.g., the book of \citep{lattimore2018bandit} 
---- one obtains that the order of magnitude of the worst case regret (for bandit problems satisfying Assumption~\ref{as:alpha}) of any algorithm is larger than $\max\Big(\sqrt{KT},   T^{1-\alpha}\Big)$. 
This matches (up to logarithmic terms) the upper bound in Theorem~\ref{thm:pbindep} \textit{with respect to $T$} (not to $K$ whenever $\alpha<1/2$).

\section{Adaptation to $\alpha$}\label{sec:alpha}

$\DelUCB$ requires (a lower bound on ) $\alpha$ as input. 
It is indeed natural to ask whether this prior information on the delays is necessary. In other words, can we design an algorithm that learns $\alpha$ as well or adapts to the delays on-the-fly ? And how much would the regret be impacted ? 
In this section we give a detailed answer to those questions, both in the asymptotic and non-asymptotic regime. In the latter, we prove a negative result in the general case. However, we propose a new assumption under which adaptivity is achievable.

\subsection{Adaptation of the problem dependent regret to $\alpha$.}

We first study possibilities of adaptation in the asymptotic regime. An immediate corollary of Theorem~\ref{thm:pbdep} is as follows.
\begin{corollary}\label{cor:pbdep} Let $T> K \geq 1$ and consider a bandit problem with minimum gap $\bar \Delta = \min_{k: \Delta_k>0} \Delta_k$, and where each arms $k$ satisfies Assumption~\ref{as:alpha} for $\alpha_k$. Consider $T>e^{e^e}$ large enough so that a) $\log\log(T)/\log(T) \leq \min_i \alpha_i$, b) $8 \bar \Delta^{-1} \leq \log T$. If $\DelUCB$  is run with parameters $((\log\log(t)/\log(t))_{t\leq T}, T,K)$, it achieves:
\begin{align*}
\overline{R}_T &\leq \sum _{i: \Delta_i>0}  \Big[\frac{128 \log(2T)}{\Delta_i} \lor \Big(\frac{8}{\Delta_i}\Big)^{\frac{1-\alpha}{\alpha} \lor 1}\Big] + 2K.
\end{align*}
\end{corollary}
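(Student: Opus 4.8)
The plan is to re-run the proof of Theorem~\ref{thm:pbdep} with the time-varying exponent $\alpha_t = \log\log t/\log t$ in place of the fixed $\alpha$, and to use hypotheses (a) and (b) to show that this substitution costs only the constant factor reflected in the change $64\to128$. The one structural fact I would exploit everywhere is that $t\mapsto \log\log t/\log t$ is decreasing on $(e^e,\infty)$ and $T>e^{e^e}$, so that $\alpha_t \ge \alpha_T$ for every relevant $t\le T$ (and $\alpha_t \le 1/e<1/2$, hence $\alpha_t\land 1/2 = \alpha_t$). Morally this says the adaptive corrections $2T_i(t)^{-\alpha_t}$ are never larger than the fixed $\alpha_T$-corrections, so the adaptive run is ``no more exploratory'' than the fixed-$\alpha_T$ run, which is exactly why the corollary is immediate from Theorem~\ref{thm:pbdep}.

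First I would verify that the confidence bound of Theorem~\ref{th:boundmu} still holds when the bias is corrected with $2T_i(t)^{-\alpha_t}$ rather than $2T_i(t)^{-(\alpha_i\land 1/2)}$. The correction dominates the true bias exactly when $\alpha_t\le \alpha_i$; since $\alpha_t$ decreases to $\alpha_T$ and, by (a), $\alpha_T\le \min_i\alpha_i$, there is a threshold round $t^\star$ solving $\log\log t^\star/\log t^\star = \min_i\alpha_i$ beyond which every arm's upper confidence bound is valid. The key point is that $t^\star$ depends only on $\min_i\alpha_i$ and \emph{not} on $T$, so at most $t^\star = O(1)$ initial rounds can violate validity; their contribution is a $T$-independent constant absorbed into the additive term.

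Then, for each suboptimal arm $i$, I would bound $\mathbb{E}[T_i(T)]$ as usual: on the good event, $I_t=i$ with $T_i(t)=n$ forces $2\sqrt{2\log(2KT^3)/n} + 2n^{-\alpha_t}\ge \Delta_i$, so either the deviation half-width or the bias correction alone exceeds $\Delta_i/2$. The deviation branch gives $n\le 32\log(2KT^3)/\Delta_i^2$; bounding $\log(2KT^3)$ by a multiple of $\log(2T)$ (valid since $K\le T$) and multiplying by $\Delta_i$ produces the $\tfrac{128\log(2T)}{\Delta_i}$ term, the doubled constant being where the overhead of adaptation and of the initial slack is absorbed. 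For the bias branch I would invoke the uniform inequality $2n^{-\alpha_t}\le 2n^{-\alpha_T}$: once $n\ge (8/\Delta_i)^{1/\alpha_T}$ the correction is below $\Delta_i/4$ at \emph{every} round simultaneously, ruling out further bias-driven pulls, and multiplying by $\Delta_i$ yields the second term $(8/\Delta_i)^{\frac{1-\alpha}{\alpha}\lor 1}$ with the operative exponent $\alpha=\alpha_T=\log\log T/\log T$. Hypothesis (b), $8\bar\Delta^{-1}\le \log T$, is precisely what keeps this quantity controlled. Summing $\Delta_i\,\mathbb{E}[T_i(T)]$ over suboptimal arms and folding in the failure-probability and initialisation costs gives the claimed bound.

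The step I expect to be most delicate is exactly this bias branch, because $\alpha_t$ shrinks in $t$: the correction $2n^{-\alpha_t}$ grows at late rounds and could in principle \emph{revive} an arm that had already been eliminated. The resolution is the uniform cap $\alpha_t\ge \alpha_T$, which bounds the correction by its terminal value and makes the single threshold $(8/\Delta_i)^{1/\alpha_T}$ valid across all rounds at once. The places where I would be most careful are confirming that, for $T>e^{e^e}$, the decreasing regime of $\log\log t/\log t$ really governs $\alpha_t\ge\alpha_T$ over the whole range $[K+1,T]$, and checking that the initial non-valid window is genuinely $T$-independent so that it is swallowed by the additive $2K$.
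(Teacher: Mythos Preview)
The paper gives no explicit proof of this corollary; it simply calls it ``an immediate corollary of Theorem~\ref{thm:pbdep}''. Your approach---re-running the proof of Theorem~\ref{thm:pbdep} with the time-varying exponent $\alpha_t=\log\log t/\log t$, exploiting that $\alpha_t$ is decreasing on $(e^e,\infty)$ so that $\alpha_t\ge\alpha_T$ uniformly, and using hypothesis~(a) to guarantee eventual validity of every confidence bound---is exactly the argument the paper has in mind. Your identification of the ``revival'' issue and its resolution via the uniform floor $\alpha_t\ge\alpha_T$ is the right structural observation.

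The one place where your sketch is not quite closed is the initial window $t<t^\star$ during which $\alpha_t>\min_i\alpha_i$ and the confidence bound on the optimal arm may fail. You say its contribution is ``swallowed by the additive $2K$'', but $t^\star$ is determined solely by $\min_i\alpha_i$ and has no reason to be bounded by $2K$. The cleaner absorption is into the main term: the doubling $64\to128$ buys you an extra $\sum_{i:\Delta_i>0}64\log(2T)/\Delta_i\ge 64\log(2T)$, which dominates the $T$-independent constant $t^\star$ once $T$ is large enough. This is in the spirit of the corollary's hypothesis ``$T$ large enough depending on problem-dependent quantities'', though it is not literally implied by (a) and (b) alone. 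You rightly flag this as the step needing care; just be aware that the receptacle for $t^\star$ is the doubled logarithmic term, not the $2K$.
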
\vspace{-2mm}
So for $T$ large enough depending on problem dependent quantities, it is possible to run a slight variant of $\DelUCB$ that takes as input the sequence $(\alpha_t=\log\log(t)/\log(t))_{t\geq 1}$ instead of a fixed $\overline{\alpha}$. 
So, for a fixed problem, asymptotically, knowing $\alpha$ is not necessary.



\subsection{Impossibility result under Assumption~\ref{as:alpha} for adapting the problem independent regret to $\alpha$}
For a fixed horizon $T<\infty$, however, it is a different story. Our second lower bound below  states that if you give a `too small' input parameter $\alpha$ to a `good' algorithm, then it has a suboptimal regret, even in the simpler case where $K=2$. 

Specifically, we define the class of \textit{$\alpha$-optimal algorithms $\cA_{\alpha}$} as the algorithms whose expected regret 
is smaller than $T^{1-\alpha}/8$ for all bandit instances satisfying Assumption~\ref{as:alpha} for a fixed~$\alpha$.

\begin{theorem}\label{thm:lbreg2}
Consider $K = 2,$ $T\geq K,$ and fix $\alpha >0$. 
For any $\beta \geq  \alpha,$ there exists a Bernoulli bandit instance satisfying Assumption~\ref{as:alpha} for $\beta$ such that the expected regret $\bar R_T$ of any $\alpha$-optimal algorithm is larger than $T^{1 -  \alpha}/8>T^{1-\beta}/8.$
\end{theorem}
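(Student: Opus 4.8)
The plan is to prove this by a two-instance change-of-measure argument that reuses the construction behind Theorem~\ref{thm:lbreg}, but now with the witnessing instance forced to satisfy Assumption~\ref{as:alpha} only at the milder level $\beta$, while the leverage that pushes the regret up to the $T^{1-\alpha}$ scale comes from the hypothesis that $A$ is $\alpha$-optimal. The conceptual point is that a purely $\beta$-valid construction, on its own, can only force regret of order $T^{1-\beta}$ (that is exactly Theorem~\ref{thm:lbreg} applied with parameter $\beta$); to reach the larger $T^{1-\alpha}$ one must exploit that an algorithm tuned to survive order-$\alpha$ delays is obliged to behave conservatively even when the true delays are lighter.

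Concretely, I would fix $K=2$ and work with the gap scale $\Delta\asymp T^{-\alpha}$, the scale at which the identifiability obstruction from the introduction operates: a high-mean arm hidden behind an order-$\alpha$ delay is statistically close, for $\Theta(T)$ rounds, to a low-mean arm seen almost instantly. I would build a companion instance $\mu$ that satisfies Assumption~\ref{as:alpha} for $\alpha$ and in which the ``ambiguous'' arm is in fact optimal but masked by a heavy delay, together with the witness $\nu$ that satisfies Assumption~\ref{as:alpha} for $\beta$ and in which the same arm is genuinely suboptimal with gap of order $\Delta$. The means and the two (necessarily different) delay tails are calibrated so that the law of the observed feedback stream generated by the two instances over the whole horizon differs in Kullback--Leibler divergence by at most a small constant.

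Granting the calibration, the endgame is standard. Using $A\in\cA_\alpha$ on the $\alpha$-instance $\mu$ controls its regret there at the $\alpha$-minimax level, which---because the ambiguous arm is optimal on $\mu$---forces $A$ to place at least a constant fraction of its pulls on that arm on $\mu$. Since the pull counts are functions of the observed history and the two feedback laws are close, Pinsker's inequality (equivalently, the Bretagnolle--Huber inequality) transfers this allocation to $\nu$: $A$ must also pull the ambiguous arm a constant fraction of the time on $\nu$. As that arm is suboptimal on $\nu$ by a gap of order $\Delta\asymp T^{-\alpha}$, the regret on $\nu$ is at least of order $\Delta\cdot T\asymp T^{1-\alpha}$, and tracking the constants gives $\overline{R}_T>T^{1-\alpha}/8>T^{1-\beta}/8$. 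Since $\nu$ satisfies Assumption~\ref{as:alpha} for $\beta$, this is the required witness.

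I expect the calibration to be the main obstacle. The two instances are forced to have different delay tails (order $\alpha$ for $\mu$, order $\beta$ for $\nu$), so their feedback streams cannot be made identical; the secretly-optimal heavy-tailed arm of $\mu$ inevitably begins to reveal its larger mean at lags approaching $T$. The crux is to show that the information carried by these late and infrequent revelations stays within a constant KL budget over $T$ rounds for the choice $\Delta\asymp T^{-\alpha}$, while simultaneously ensuring that $\mu$ still forces a heavy enough pull allocation to the ambiguous arm---the tension here is that the masking requirement on $\mu$ tends to shrink the available optimality margin, which in turn weakens the forcing. Balancing that masking requirement against the $\beta$-validity of $\nu$, and bounding precisely the divergence contributed by the lags at which the two streams separate, is where the real work lies, and is what distinguishes the target rate $T^{1-\alpha}$ from the weaker $T^{1-\beta}$ that a naive $\beta$-valid construction would give.
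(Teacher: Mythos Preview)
Your high-level strategy---build a companion $\alpha$-instance on which $\alpha$-optimality forces heavy allocation to an arm, then transfer that allocation to a $\beta$-valid witness where the same arm is suboptimal with gap $\asymp T^{-\alpha}$---matches the paper's. However, you have made the construction considerably harder than necessary, and the ``main obstacle'' you anticipate does not arise in the paper's proof.

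The paper's witness instance (Problem~A) has \emph{no delays at all}: both arms have delay distribution $\delta_0$. This trivially satisfies Assumption~\ref{as:alpha} for every $\beta>0$, so in particular for the given $\beta\geq\alpha$. The companion (Problem~B) puts delay $(1-p)\delta_0+p\delta_T$ with $p=T^{-\alpha}$ on arm~2 and chooses its Bernoulli parameter $1/2+q$ with $q=p/(4-2p)$ so that the \emph{observed} law of $X_{s,u}\mid I_s=2$ is exactly $\mathcal B((1/2+q)(1-p))=\mathcal B(1/2-q)$, identical to arm~2 in Problem~A. Hence the two feedback streams are not merely close in KL---they are \emph{indistinguishable}, and $\mathbb E_A T_1(T)=\mathbb E_B T_1(T)$ holds as an equality with no Pinsker or Bretagnolle--Huber step. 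Since the algorithm is $\alpha$-optimal, $q\,\mathbb E_B T_1(T)<T^{1-\alpha}/8$, and transferring to~A gives $q\,\mathbb E_A T_2(T)=q(T-\mathbb E_A T_1(T))>qT-T^{1-\alpha}/8\geq T^{1-\alpha}/8$.

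Your version, with the witness carrying genuine order-$\beta$ delays, would presumably still go through, but you would have to carry out the KL calibration you flag as the crux; the paper sidesteps it entirely by noticing that a delay-free instance already satisfies Assumption~\ref{as:alpha} at every level.
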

In other words, an algorithm that performs optimally uniformly under Assumption~\ref{as:alpha} for a given $\alpha$  \textit{cannot at all} adapt to $\beta \geq \alpha$.

\subsection{New algorithm for adapting the problem independent regret to $\alpha$ under more restrictive assumptions}

Yet, is adaptivity a lost cause? Under the weak Assumption~\ref{as:alpha}, Theorem~\ref{thm:lbreg2} above is quite disheartening. A structural reason for this is that it is impossible to estimate $\alpha$ under this assumption. We show that under a \emph{slightly} more restrictive assumption this becomes possible.

\begin{assumption}\label{as:alph2}
Assume that there exists  $0<c\leq 1$ and $\bar \mu >0$, such that $\min_k \mu_k > \bar \mu$ and for all $i\in [K]$,
\begin{equation}
cm^{-\alpha} \leq |1 - \tau_i(m)| \leq m^{-\alpha}.   
\end{equation}
Assume also that $\alpha \geq \underline{\alpha}$ for some $\underline{\alpha}>0$.
\end{assumption}


This assumption \emph{does not mean} that the delay distributions of the arms are all the same. It means that the parameter $\alpha$ now globally characterizes the tails of the delay distributions. The challenge for estimating it is that delays are only \emph{partially observable}. To further explain Assumption~\ref{as:alph2}, let's consider small and a large delay d and D, such that $D>d>0$. The conditional expectation of the difference of the same reward after respectively $d$ and $D$ time steps have passed is:
\begin{align}
\mathbb E_{|I_t}[ X_{t,D} - X_{t,d}] &= \mu_{I_t} \tau_{I_t}(D) - \mu_{I_t} \tau_{I_t}(d)\nonumber\\
&\in [c\mu_{I_t}d^{-\alpha} - \mu_{I_t}D^{-\alpha}, \mu_{I_t}d^{-\alpha}],\nonumber
\end{align}
where $\mathbb E_{|I_t}$ is the conditional expectation with respect to the arm $I_t$ pulled at time $t$, and 
where $c>0$ comes from Assumption~\ref{as:alph2}. Therefore, if $d \leq \left(c/2\right)^{1/\underline{\alpha}}D$, we now have that\footnote{Note that $c\leq 1$ so that with this definition of $d,D$, we have that $D \geq d$.}
\[ \mathbb E_{|I_t}[ X_{t,D} - X_{t,d}] \in [\frac{c \bar\mu}{2}d^{-\alpha} , d^{-\alpha}], \]
where $\bar\mu, \underline{\alpha}>0$ are defined in Assumption~\ref{as:alph2}. So we can now see that it is possible to estimate $\alpha$ up to a logarithmic factor using the logarithm of an estimator of $\mu_{i} \tau_{i}(D) - \mu_{i} \tau_{i}(d)$, if we properly choose $d$ and $D$ from   some arm $i$ sampled often enough.
We now formalize this idea, introducing all the necessary quantities. 

In all this section, , we denote $\bar I_t \triangleq \argmax_k T_k(t)$. We only use the samples of this arm to estimate $\alpha$ at each round. To simplify the notation let $\bar T_t  \triangleq T_{\bar I_t}(t)$ and for some delay $D$, let

\[\bar{m}_{t,D} \triangleq \frac{1}{\bar{T}_{t-D}}\sum_{s=1}^{t-D}X_{s,D}\mathbf 1 \{I_s=\bar I_t\}\] 

be the sample mean after waiting $D$ steps.
We set $D_t \triangleq \lfloor \bar T_t/2\rfloor$ and $d_t \triangleq \left\lfloor\left(c/2)\right)^{1/\underline{\alpha}} D_t\right\rfloor$. Subsequently, we define the estimator of $\alpha$ at round $t$ as 
\[\hat \alpha_t \triangleq \min\left(-\frac{\log\left(\bar m_{t,D_t} - \bar m_{t,d_t} \right)}{\log(\bar T_t)}\CommaBin \frac12 \right)\cdot\]
Such an estimator is related to quantile or CDF-based estimators used in extreme value theory \cite{de2007extreme,carpentier2015adaptive}. 
We define set the lower confidence bound on it as
\[\bar \alpha_t \triangleq \left[\hat \alpha_{t} - \frac{\log\left(\frac{2^4\sqrt{\log(2KT^3)}}{c \bar \mu}\right)}{\log(\bar T_t)}\right]\lor 0.\]

$\aDelUCB$ simply uses $\bar \alpha_t$ for the computation the upper confidence bounds as in Eq.\,\ref{eq:ucb}. The algorithm therefore does not need to know for which parameter $\alpha$ Assumption~\ref{as:alph2}
is satisfied. We summarize \aDelUCB in Algorithm~\ref{fig:algo2}.

\begin{algorithm}
\caption{$\aDelUCB$ \label{fig:algo2}}
\begin{algorithmic}
\STATE \textbf{Input:} $c, \underline{\alpha},  \bar\mu, T,K$
\STATE \textbf{Initialisation:}  Pull each arm twice.
\FOR{$t=2K+1,...,T$}
\STATE Pull the arm  $I_t\in \!\!\displaystyle \argsup_{i \in  \{1, \ldots, K\}} UCB_{i}(t-1)$ when using the parameter $\bar \alpha_t$ in the UCB.
\vspace{1mm} 
\STATE Observe all individual feedback $(X_{s,t-s})_{ s\leq t}$.
\ENDFOR
\end{algorithmic}
\end{algorithm}
 The expected regret of the \aDelUCB is bounded by the following theorem.

\begin{theorem}\label{thm:pbindepadapt}
 Let $T> K \geq 1 $ and $\alpha, \underline{\alpha}, c,  \bar\mu >0, $ such that Assumption~\ref{as:alph2} holds. The  
 expected regret of $\aDelUCB$ is bounded as 
 \[
 R_T = \tcO(K^\alpha T^{1-\alpha \land \nicefrac{1}{2}}).
 \]
\end{theorem}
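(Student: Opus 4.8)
The plan is to show that the data-driven exponent $\bar\alpha_t$ used by $\aDelUCB$ is, with high probability and uniformly in $t$, a \emph{valid} and \emph{sharp} surrogate for $\alpha\land 1/2$, so that the analysis of Theorem~\ref{thm:pbindep} carries over at the cost of only poly-logarithmic factors. Concretely, I would establish on a single high-probability event $\cE$ the two-sided control
\[
(\alpha\land\tfrac12) - \tilde O\!\left(\frac{1}{\log \bar T_t}\right) \;\le\; \bar\alpha_t \;\le\; \alpha\land\tfrac12 ,
\]
for every round $t$. The upper (validity) inequality guarantees that plugging $\bar\alpha_t$ into the bound of Theorem~\ref{th:boundmu} still yields a genuine upper confidence bound on each $\mu_i$ — a smaller exponent only enlarges the bias term $2T_i(t)^{-(\bar\alpha_t\land 1/2)}$ — so optimism, and hence the whole UCB regret decomposition, is preserved. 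The lower (accuracy) inequality controls how much extra exploration the oversized bias term induces.

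The core of the argument is the concentration analysis of $\bar\alpha_t$. Writing $\Delta^\star_t := \mu_{\bar I_t}\big(\tau_{\bar I_t}(D_t)-\tau_{\bar I_t}(d_t)\big)$, Assumption~\ref{as:alph2} together with the choice $d_t=\lfloor (c/2)^{1/\underline{\alpha}}D_t\rfloor$ gives, exactly as in the discussion preceding Algorithm~\ref{fig:algo2}, $\Delta^\star_t\in[\tfrac{c\bar\mu}{2}d_t^{-\alpha},\,d_t^{-\alpha}]$. I would then bound the deviation $|\bar m_{t,D_t}-\bar m_{t,d_t}-\Delta^\star_t|$ by a Hoeffding/Azuma term of order $\sqrt{\log(2KT^3)/\bar T_t}$; the structural fact enabling this is that $\bar I_t$ is the most-pulled arm, so by pigeonhole $\bar T_t\ge t/K$, and since $D_t=\lfloor \bar T_t/2\rfloor$ at least $\bar T_t/2$ of its pulls occur before round $t-D_t$, guaranteeing enough samples in $\bar m_{t,D_t}$. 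To make the bound uniform despite the data-dependent choice of $\bar I_t, D_t, d_t$, I would union-bound over all arms $k\in[K]$, all sample counts $n\le T$ and all delay thresholds $\le T$ (for fixed $k$ and threshold the summands are independent Bernoulli-type variables), which is precisely what the level $\log(2KT^3)$ budgets. Taking $-\log(\cdot)/\log\bar T_t$ and using $\log d_t/\log\bar T_t = 1 - O(1/\log\bar T_t)$ turns the multiplicative window $[\tfrac{c\bar\mu}{2}d_t^{-\alpha},d_t^{-\alpha}]$ into the additive window above, the logarithmic correction subtracted in the definition of $\bar\alpha_t$ absorbing both the constants $c,\bar\mu$ and the noise level $\sqrt{\log(2KT^3)}$.

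For validity I would split on the magnitude of $\bar T_t$. When $\bar T_t$ is small the subtracted correction exceeds $1/2$ and the clipping forces $\bar\alpha_t=0\le\alpha\land 1/2$; when $\bar T_t$ is large the deviation is at most $\Delta^\star_t/2$ and the inverted-log computation gives $\bar\alpha_t\le\alpha\land 1/2$ directly (for $\alpha\ge1/2$ the $1/2$-clip makes this automatic). The constant $2^4$ in the numerator of the correction is exactly what makes these two regimes overlap, so validity holds for \emph{all} rounds with no gap; the accuracy inequality follows symmetrically from $\bar m_{t,D_t}-\bar m_{t,d_t}\le \Delta^\star_t+\tilde O(\bar T_t^{-1/2})$, again with the $1/2$-clip handling $\alpha\ge 1/2$.

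Finally I would transfer this to the regret. On $\cE$ the UCBs are valid, so the optimism argument bounds pulls of each suboptimal arm exactly as in Theorems~\ref{thm:pbdep} and~\ref{thm:pbindep}, except that the bias carries the exponent $\bar\alpha_t$ rather than $\alpha\land 1/2$. The decisive observation is that, since $\bar T_t=\max_k T_k(t)$, every count satisfies $T_i(t)\le \bar T_t$, whence the bias inflation factor obeys
\[
T_i(t)^{(\alpha\land 1/2)-\bar\alpha_t}\le \bar T_t^{(\alpha\land 1/2)-\bar\alpha_t}\le \bar T_t^{\tilde O(1/\log \bar T_t)}=\mathrm{polylog}(KT).
\]
Thus the effective confidence radius is within a polylogarithmic factor of the one used with the true exponent at every round; this propagates to a polylogarithmic inflation of the pull counts (the exponent bounded by $1/\underline{\alpha}$), reproducing the bound of Theorem~\ref{thm:pbindep} up to polylog and giving $\tcO(K^{\alpha}T^{1-\alpha\land 1/2})$. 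The complement of $\cE$ contributes $O(1)$ since $\delta=\Theta(1/(KT^3))$ and the per-round regret is at most one, and the initial exploration adds $O(K)$. The main obstacle I anticipate is making the two-sided control of $\bar\alpha_t$ genuinely uniform over all rounds under the adaptive, self-referential sampling (the estimator depends on $\bar I_t$, which depends on past values of $\bar\alpha_s$), and in particular closing the near-threshold regime $\alpha\approx 1/2$ where signal and noise are comparable; the clipping at $1/2$ and the precise logarithmic correction are what resolve it.
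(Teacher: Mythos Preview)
Your proposal is correct and follows essentially the same approach as the paper: the two-sided concentration of $\hat\alpha_t$ via Azuma--Hoeffding on the differenced empirical means $\bar m_{t,D_t}-\bar m_{t,d_t}$ under Assumption~\ref{as:alph2} is exactly the paper's Proposition~\ref{prop:alph}, and your validity/accuracy reading of $\bar\alpha_t$ matches how the paper uses it. The only cosmetic difference is the final bookkeeping: the paper splits time at $t=T^{1/2}$, bounds the first $T^{1/2}$ rounds trivially, and then applies Theorem~\ref{thm:pbindep} black-box with the fixed surrogate exponent $\alpha\land\tfrac12-U_{T^{1/2}}$ (valid since $\bar T_t\ge t/K$ makes $U_t$ monotone), whereas you control the per-round bias inflation via $T_i(t)^{(\alpha\land 1/2)-\bar\alpha_t}\le \bar T_t^{\tilde O(1/\log\bar T_t)}=\mathrm{polylog}$; both routes yield the same polylogarithmic overhead.
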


And so $\aDelUCB$ achieves a regret that depends
on the unknown parameter $\alpha$ and is of the same order as the upper bound of Theorem~\ref{thm:pbindep} up to logarithmic term. This algorithm is therefore minimax optimal up to logarithmic terms relatively to the lower bound defined in Section~\ref{sec:theory}.

\section{Discussion}
\label{sec:discussion}

   \label{subsec:discussion}
   \paragraph{Comparison to bandits without delays.} 
   As discussed in the Section~\ref{sec:algorithm}, the major difference between $\DelUCB$ and the classical UCB algorithm is the extra bias term. In the classical bandit setting, we have strictly more information than in our setting. When rewards are delayed, the algorithm always has to deal with temporarily missing data: some actions have been taken but their rewards are missing until the delay has passed.  In particular, when $\alpha <1$, the delays are heavy-tailed, so their expectation is infinite, and this buffer of missing data always keeps growing in size with $T$, creating a non-negligible bias in the estimators. This difference is even more important when $\alpha < 1/2$, which corresponds to the situation where the bias term is larger than the usual deviation term. For such a problem, it is clear that a classical UCB would have a linear regret, see Section~\ref{sec:experiments}.
   
We now discuss optimality of $\DelUCB$ by commenting both our problem-dependent (Th.~\ref{thm:pbdep}) and problem-independent (Th.~\ref{thm:pbindep}) bounds. 
   

   \begin{itemize}
       \item The problem-dependent bound for our problem is of the order $\sum_{k: \Delta_k >0} \log(T)/\Delta_k$. Up to a term that depends only on the $(\Delta_k)_k$ (and not on $T$, see Section~\ref{sec:theory}), this is of the same order than the problem-dependent bound in classical stochastic bandits~\cite{lai1985asymptotically} - and it does \textit{not} depend on the delay distributions (e.g.~it does not depend on $\alpha$).  
       \item On the other hand, the problem-independent bound, is of order $T^{1 - \alpha \land (1/2)} K^{\alpha \land (1/2)}$ up to logarithmic terms. It differs widely from the problem-independent bound from classical stochastic bandits, which is of order $\sqrt{KT}$ -- see e.g.~\citep[Chapter~7]{lattimore2019bandit}. It is of same order when $\alpha \geq 1/2$, and is larger when $\alpha <1/2$. However,  Theorem~\ref{thm:lbreg} ensures that this rate is minimax optimal with respect to $T$, up to a multiplicative term $K^{\alpha \land (1/2)}$ and some logarithmic terms. This means that this gap is the price to pay for having delays that are potentially long, and can therefore pose strong bias problems.
   \end{itemize}
   \vspace{-0.5cm}
   \paragraph{Parameters of the algorithms.} 
   Our algorithm needs as input a parameter $\alpha$, which is the only external prior information on the delays given as input.  It is a more delicate question to decide whether this prior information is necessary. We discuss it extensively in Section~\ref{sec:alpha}. In a nutshell, from an asymptotic perspective, the knowledge  of $\alpha$ is not necessary, but from a non-asymptotic perspective it is. Nonetheless, under a slightly stronger hypothesis on the delay distribution, see Assumption~\ref{as:alph2}, there exists a fully adaptive algorithm Adapt-$\DelUCB$. Its regret is sublinear, see Theorem~\ref{thm:pbindepadapt}, and matches that of $\DelUCB$ up to a logarithmic term.

\section{Experiments}
\label{sec:experiments}
In this section, we evaluate the empirical performance of $\DelUCB$. Throughout this section, we will provide experiments where the delays of each arm $i$ follows the Pareto Type I distribution with tail index $ \alpha_i$, and where the rewards of each arm $i$ follows a Bernoulli distribution with parameter $\mu_i$. 

First, we investigate the performances of $\DelUCB$ with respect to the parameters of the problem, $(\alpha_i,\Delta_i)_{i\in[K]}$, and to the hyperparameter of the algorithm $\alpha$ -  which we write $\overline{\alpha}$ here to avoid confusion. 

Second, we compare it to the state of art baseline, which is the censored version of \DUCB of \citep{vernade2017stochastic} with various threshold windows. 
This algorithm takes two parameters, the threshold $m$ and the CDF $\tau$ of the delays\footnote{\citep{vernade2017stochastic} assumed that the delay distributions are \textit{know and homogeneous across arms}.}. The threshold $m$ calibrates the time that the algorithm waits before updating reward. To the best of our knowledge, \DUCB  is the strongest baseline that deals with \textit{partially observed delays}.





\subsection{Influence of the parameters of the problem}


\paragraph{Study of the hyperparameter $\overline{\alpha}$.} $\DelUCB$ takes $\overline{\alpha}$ as a parameter. The choice of this parameter is a key point for the implementation of  $\DelUCB$. 
Ideally we would like to take $\overline{\alpha} = \min_i\alpha_i$ but in the absence of information on the delay distributions we cannot do this. We therefore illustrate the sensitivity of our method to the mis-calibration of $\overline{\alpha}$. We consider a $2$-arm setting with horizon $T= 3000$, with arm means $\mu=(0.5, 0.55)$ and with tail index $\alpha_1=1, \alpha_2=0.3$ respectively. We consider $\overline{\alpha}\in[0.02,0.5]$ and display in Figure~\ref{exp:alpha1} the regret in function of $\overline{\alpha}$.
\begin{figure}[h]
\begin{center}
\includegraphics[width=0.4\textwidth]{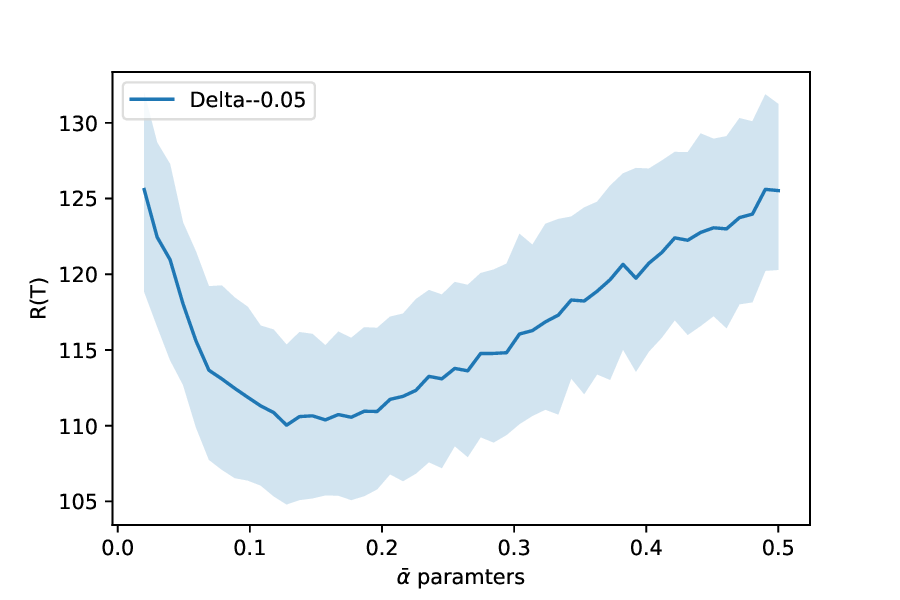}
\caption{Regret at round $T=3000$ of $\DelUCB$ in function of $\overline{\alpha}\in[0.02,0.5]$ for the bandit problem $\mu=(0.5, 0.55)$, and $\alpha_1 = 1$ and $\alpha_2 = 0.3)$. Results are averaged over 400 runs.
}\label{exp:alpha1}
\vspace{-0.5cm}
\end{center}
\end{figure}
It can be seen that the regret first decreases with $\overline{\alpha}$ and then increases after approximately $\overline{\alpha} = 0.3$. This is precisely what is expected For small $\overline{\alpha}$ the algorithm is consistent but explores too much and this induces a large regret. For $\overline{\alpha}$ larger than $\min_i \alpha_i$, the regret starts to increase again, since the bias coming from the delays are not sufficiently taken into account by the UCB. 
 
\paragraph{Study of the impact of $(\alpha_i, \Delta_i)_{i\in [K]}$.} We now investigate the dependency of the regret of $\DelUCB$ on the delay parameters and arm gaps  $(\alpha_i, \Delta_i)_{i\in [K]}$. We consider the following two armed problems where we set $\mu=(0.4, 0.4+\Delta)$ where we take $\Delta\in [0.02,...,0.6]$ - fixing the horizon to $T= 3000$. For each problem, we respectively choose $\alpha_1=1$ and $\alpha_2\in\{0.2, 0.3,0.4, 0.5, 0.8\}$ and run the $\DelUCB$ policy with optimal parameter $\overline{\alpha} = \alpha_2$, so that we can see the impact of $\alpha_2$ and $\Delta$ independently from calibration issues. The results represented in the Figure ~\ref{exp:alpha2} display the influence of the arm gap $\Delta$ on the regret, for various values of $\alpha_2$.
\begin{figure}[h]
\begin{center}
\includegraphics[width=0.4\textwidth]{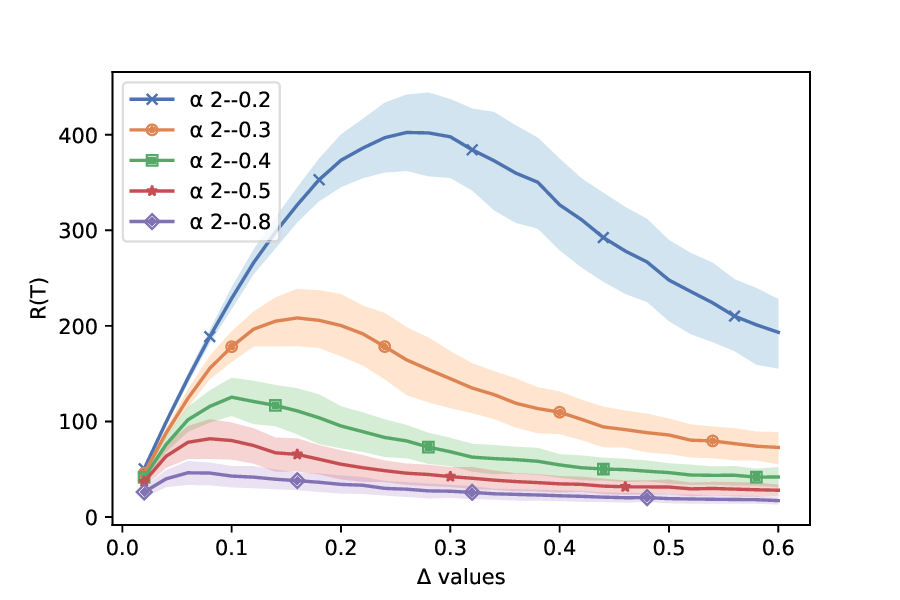}
\caption{Regret of $\DelUCB$ in function of the arm gap $\Delta \in [0.02,0.6]$ where the bandit problem is characterized by $\mu=(0.5, 0.5+\Delta)$ and $\alpha_1=1$ and where $\alpha_2\in\{0.2, 0.3,0.4, 0.5, 0.8\}$ - each curve corresponds to a different value of $\alpha_2$. For each problem, $\DelUCB$ was run with horizon $T=3000$ and with parameter $\overline{\alpha} = \alpha_2$ and the results are averaged over 300 runs.}
 \label{exp:alpha2}\vspace{-0.5cm}
\end{center}
\end{figure}

Figure~\ref{exp:alpha2} illustrates a standard phenomenon in stochastic bandits, and which also holds for delayed bandits: for small values of the arm gap $\Delta$, the regret increases with $\Delta$. This corresponds to the fact that for small $\Delta$, the algorithm explores and is not able to focus on the most promising arms since the arms means are too close. Then at some point for larger values of $\Delta$ the regret starts decreasing, as predicted by the bound in Theorem~\ref{thm:pbdep}. A phenomenon that is specific to delayed bandits is that the smaller $\alpha_2$, the larger the regret. This is expected from Theorem~\ref{thm:pbindep}, since the smaller $\alpha_2$, the more delayed the rewards, and the harder the problem. A more subtle phenomenon, also illustrating Theorem~\ref{thm:pbindep}, is that the smaller $\alpha_2$, the larger the value and the position of the maximum of each curve - the maximum or the regret being bounded by the problem independent bound that depends here on $\overline{\alpha} = \alpha_2$.


\subsection{Comparing with \textmd{D-UCB}. }

 We compare here the regret of $\DelUCB$ with the one of \textmd{D-UCB} as a function of the horizon $T$, for different values of the parameters - respectively $\overline{\alpha}$ and $m$. Since \textmd{D-UCB} was designed for the context where the distribution of the delays is the same for all arms, i.e.~$\alpha_i$ identical over arms, we consider that scenario as well as the more general case with heterogeneous $\alpha_i$'s.
 
  \paragraph{Homogeneous delay distributions across arms.} In the first scenario, we consider a two armed bandit problem with means $\mu=(0.6,0.8)$. We set the same tail index for both arms, i.e.~$\alpha_1=\alpha_2=0.7$. We run $\DelUCB$ for $\overline{\alpha}\in \{0.1, 0.5\}$. For \textmd{D-UCB} we consider various threshold parameters  $m\in \{10, 50, 100, 200\}$, and feed \textmd{D-UCB} with the \textit{exact delay distribution of the arms} - which gives \textmd{D-UCB} an important edge over our algorithm. The results are displayed in Figure~\ref{exp:comp3} (regret as a function of time). 
\begin{figure}[h]
\begin{center}
\includegraphics[width=0.4\textwidth]{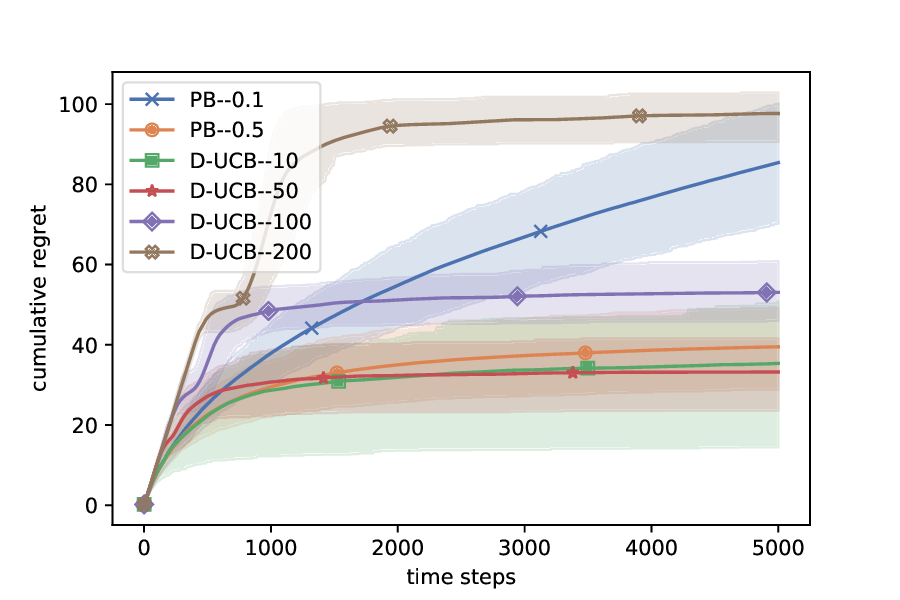}
 \caption{Regret of the \textmd{D-UCB} and $\DelUCB$ for $\mu=(0.6, 0.8)$ and with homogeneous delay distributions characterised by $\alpha_1 =  \alpha_2 = 0.7$. We plot results for $\DelUCB$ with parameters $\overline{\alpha}=(0.1,0.5)$, and for \textmd{D-UCB} with parameters $m=(10, 50, 100, 200)$. The results are averaged over 400 runs.}
 \label{exp:comp3}
 \vspace{-7mm}
\end{center}
\end{figure}

The performances of \textmd{D-UCB} and $\DelUCB$ are comparable, in particular in the case of good calibration of the parameters, i.e.~respectively $\overline{\alpha} = 0.5$, and $m=50$. $\DelUCB$ performs slightly worse in the best case, but note that \textmd{D-UCB} is tuned with the full knowledge of the CDF of the delays. 
An observation coming from Figure~\ref{exp:comp3} is the presence of long lasting linear phases at the initial stage of learning of \textmd{D-UCB} for large $m$ which tend to be caught up over time. This comes from the structure of the algorithm, and from the fact that it has to wait until $m+K$ time steps before it starts exploiting the observations - which is not the case for our strategy.
\paragraph{Non-homogeneous delay distributions.} In the second scenario we still consider a two-armed bandit problem with means $\mu=(0.6,0.8)$, and we set the parameters of the tail distribution of the delays as $\alpha_1=1, \alpha_2=0.3$. This is a 'difficult' scenario, since arm $2$ which has the highest mean has also the lowest delay parameter. This means that its delays are more heavy tailed. We consider as before $\DelUCB$ with parameters $\overline{\alpha}\in \{0.1, 0.5\}$, and the \textmd{D-UCB} for threshold parameters  $m\in \{10, 50, 100, 200\}$. Regarding the CDF parameter of \textmd{D-UCB}, we provide a Pareto distribution with parameter $0.7$. 
The results for all policies are displayed on Figure~\ref{exp:comp1} (regret in function of horizon $T$). 
\begin{figure}[h!]
\begin{center}
\includegraphics[width=0.4\textwidth]{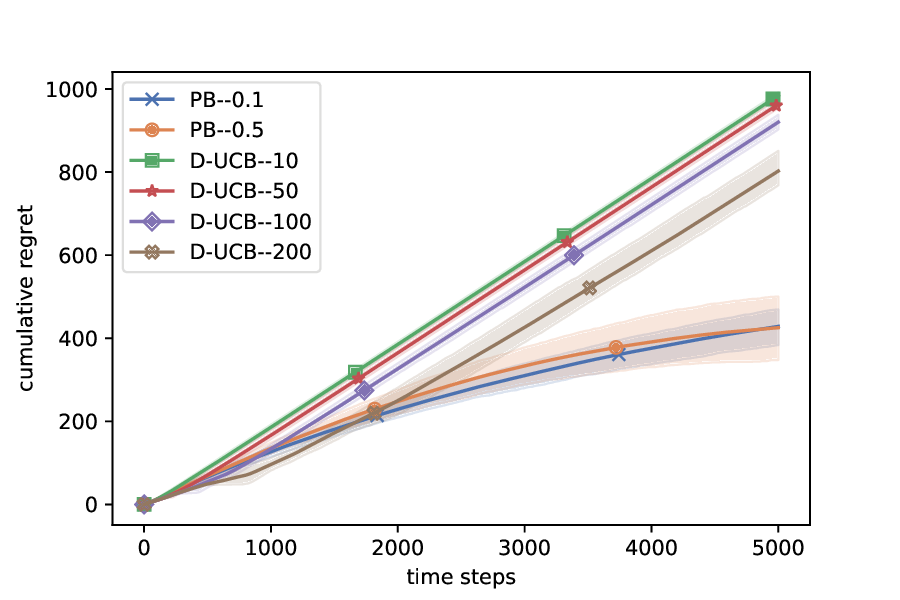}
 \caption{Regret of the \textmd{D-UCB} and $\DelUCB$ for $\mu=(0.6, 0.8)$ and with delay distributions that vary across arms, characterized by $\alpha_1 =1$ and $\alpha_2 =  0.3$. We plot results for $\DelUCB$ with parameters $\overline{\alpha}=(0.1,0.5)$, and for \textmd{D-UCB} with parameters $m=(10, 50, 100, 200)$. The results are averaged over 400 runs. 
}\vspace{-7mm}
 \label{exp:comp1}
\end{center}
\end{figure}
We observe that \textmd{D-UCB} has a very high regret, increasing linearly with $T$. This can be explained by the fact that \textmd{D-UCB}  \textit{cannot} adapt to delay distributions varying across arms. In other words, it does not take the heterogeneity of the delays into account and can be confused by this difficult situation where the best arm also corresponds to the longest delays. Consequently, \textmd{D-UCB} focuses only on observations that are substantially biased and  misidentifies the best arm. On the other hand, $\DelUCB$ adapts to the heterogeneous delays and manages to identify the best arm, leading to a sub-linear regret.

\section{Conclusion}
\label{sec:Conclusion}
ßIn this paper, we extend the problem of learning with bandit feedback and partially observable delays to arm-dependent delay distributions with possibly unbounded expectations. We close many existing open problems left by \citep{vernade2017stochastic, vernade2018contextual}, either with positive answers (Theorem~\ref{thm:pbdep}) or negative answers (Theorem~\ref{thm:lbreg2}). The major difficulty faced by the learner in this setting is the identifiability issue due to missing rewards which induce a bias in the estimator of the real payoff. Under the assumption that the tail distribution of the delays is bounded - although it might be very heavy tailed - we designed a very simple UCB-based algorithm, termed $\DelUCB$. We proved that $\DelUCB$ performs almost as well as the standard UCB in the classical, non-delayed case, from a problem dependent point of view. We also studied the problem of adaptivity to the delay distributions and concluded that this is not possible (Theorem~\ref{thm:lbreg2}) unless a global bound on the tails hold (Assumption~\ref{as:alph2}). Closing the gap between the problem-independent bound and the lower bound may constitute the object of future studies.

\paragraph{Acknowledgements.} The work of A. Carpentier is partially supported by the Deutsche Forschungsgemeinschaft (DFG) Emmy Noether grant MuSyAD (CA 1488/1-1), by the DFG - 314838170, GRK 2297 MathCoRe, by the DFG GRK 2433 DAEDALUS (384950143/GRK2433), by the DFG CRC 1294 'Data Assimilation', Project A03, and by the UFA-DFH through the French-German Doktorandenkolleg CDFA 01-18 and by the UFA-DFH through the French-German Doktorandenkolleg CDFA 01-18 and by the SFI Sachsen-Anhalt for the project RE-BCI. The work of A. Manegueu is supported by the Deutsche Forschungsgemeinschaft (DFG) CRC 1294 'Data Assimilation', Project A03.

\clearpage

\appendix
\onecolumn
\section{Proof of Theorems~\ref{th:boundmu}}
\label{ap:confi_inter}
The aim of this section is to bound the deviation of the estimator $\hat \mu_i$ from the true mean $\mu_i$. For this purpose, we first begin by defining the favorable event, that is, the event for which  all confidence intervals hold for all arms at all time steps. We then prove with Hoeffding's inequality that this event occurs with high probability. And finally, we derive the desired results by bounding the bias incurred by $\hat \mu_i$ and leveraging the properties of this favorable event.

 \paragraph{Step 1: the favorable event.} First let us define the following quantities (for $i \leq K$ and $u \leq T_i(T)$):
$$\overline{t}_{i}(u) = \inf \{t\geq 0 :  \sum_{u\leq t} \mathbf 1\{I_u = i\} = u\},$$
$$\overline{C}_{i,u} = C_{\overline{t}_{i}(u)},$$
$$\overline{D}_{i,u} = D_{\overline{t}_{i}(u)}.$$
Here $\overline{t}_{i}(u)$ is the time where we pulled arm $i$ for the $u$-th time, $\overline{C}_{i,u}, \overline{D}_{i,u}$ are respectively the corresponding reward and delay. Note that
\begin{align}\label{eq:defdef}
\sum _{u=1}^{t} X_{u,t-u}\mathbf 1\{I_u = i\} = \sum _{u=1}^{T_i(t)} \overline{C}_{i,u} \mathbf 1\{ \overline{t}_{i}(u) + \overline{D}_{i,u} \leq t \}.
\end{align}

 We define the event $\xi$ as follows;
\[\xi \triangleq \left\{ \forall i \in \{1,..., K \},  \forall t \in \{1,..., T \}, \forall s \in \{1,..., T_i(t) \}:  \left|\sum _{u=1}^{s} \overline{C}_{i,u} \mathbf 1\{ \overline{t}_{i}(u) + \overline{D}_{i,u} \leq t \} - \sum _{u=1}^{s}  \tau_{i}(t - \overline{t}_{i}(u)) \mu_i \right| \leq  \sqrt {2\log \frac{2} {\delta}s }  \right\}\cdot\]
Note that for fixed $i \in \{1,..., K \},  t \in \{1,..., T \}, s \in \{1,..., T_i(t) \}$, we have that
\[\Bigg(\sum_{u \leq v} \Big[\overline{C}_{i,u} \mathbf 1\{ \overline{t}_{i}(u) + \overline{D}_{i,u} \leq t \} -   \tau_{i}(t - \overline{t}_{i}(u)) \mu_i\Big]\Bigg)_{v \leq s}\]
is a martingale adapted to the filtration $\Big(\sigma(\overline{C}_{i,v}, \overline{D}_{i,v}, \overline{t}_{i}(v))\Big)_{v \leq s}$. And since the martingale increments $\Big[\overline{C}_{i,u} \mathbf 1\{ \overline{t}_{i}(u) + \overline{D}_{i,u} \leq t \} -   \tau_{i}(t - \overline{t}_{i}(u)) \mu_i\Big]$ belong to $[-1,1]$ by assumption, it holds by Azuma-Hoeffding's inequality that with probability larger than $1-\delta$
\[\Bigg|\sum_{u \leq s} \Big[\overline{C}_{i,u} \mathbf 1\{ \overline{t}_{i}(u) + \overline{D}_{i,u} \leq t \} -   \tau_{i}(t - \overline{t}_{i}(u)) \mu_i\Big]\Bigg| \leq \sqrt {2\log \frac{2} {\delta}s }.\]
Since $T_i(t) \leq t$, it holds by a union bound that
\begin{align}\mathbb P(\xi) \geq 1 - KT^2 \delta.\label{prob:xi}\end{align}


 \paragraph{Step 2: Bound on the $|\hat \mu_i(t) - \mu_i|$ on $\xi$.}

By Equation~\ref{eq:defdef}, it holds that
\begin{align}\xi \subset \left\{\forall i \in \{1,..., K \},  \forall t \in \{1,..., T \}, \left|\hat \mu_i(t) - \frac{1}{T_i(t)}\sum_{u=1}^{t} \tau_{i}(t-u) \mu_i \mathbf 1\{I_u = i\}\right| \leq  \sqrt {\frac{2\log \frac{2} {\delta}}{ T_i(t)} } \right\}.\label{eq:xi}\end{align}

Note that we have  
\begin{align*}
\Big|\frac{1}{T_i(t)}\sum_{u=1}^{t} \tau_{i}(t-u) \mu_i  \mathbf 1\{I_u = i\}  -  \mu_i \Big| &\leq \frac{1}{T_i(t)}\sum_{u=1}^{t} |(1 - \tau_{i}(t-u)) \mu_i  \mathbf 1\{I_u = i\}| \\
&\leq   \frac{1}{T_i(t)}\sum_{u=1}^{t}  \mathbf 1\{I_u = i\} ((t-u)\lor 1)^{-\alpha},
\end{align*}
since $0 \leq \mu_i \leq 1$ and since by Assumption~\ref{as:alpha} it holds that $|\tau_{i}(m) - 1| \leq (m\lor 1)^{-\alpha}$. And since $\sum_{u=1}^{t}  \mathbf 1\{I_u = i\}  = T_i(t)$, we have
\begin{align}
\Big|\frac{1}{T_i(t)}\sum_{u=1}^{t} \tau_{i}(t-u) \mu_i  \mathbf 1\{I_u = i\}  - T_i(t) \mu_i \Big| 
&\leq   \frac{1}{T_i(t)}\sum_{v=0}^{T_i(t)}  (v \lor 1)^{-\alpha} \nonumber\\
&\leq   \frac{1}{T_i(t)}\left[2+ \int_{1}^{T_i(t)}  v^{-\alpha} dv\right], 
\label{eq:bia}
\end{align}
whenever $\alpha \leq 1/2$:
\[
 \frac{1}{T_i(t)}\Big[2+ \int_{1}^{T_i(t)}  v^{-\alpha} dv\Big] =  \frac{1}{T_i(t)}\Big[2+ [\frac{-1}{1- \alpha}v^{1-\alpha}]_1^{T_i(t)} \Big]
\]
\begin{align*}
  & =  \frac{1}{T_i(t)} \Big[2+ \frac{1}{1- \alpha} [T_i(t)^{1-\alpha} - 1] \Big]\\
& = \frac{1}{T_i(t)}\Big[ -\frac{2\alpha}{1- \alpha} + \frac{1}{1- \alpha}T_i(t)^{1-\alpha} \Big]\\
&\leq \frac{1}{T_i(t)}\Big[ 2 T_i(t)^{1-\alpha} \Big]\\
&\leq 2 T_i(t)^{-\alpha}  = 2 T_i(t)^{-\alpha \land (1/2)}.
\end{align*}

Now for $\alpha \geq 1/2$:
\begin{align*}
  & \frac{1}{T_i(t)}\Big[2+ \int_{1}^{T_i(t)}  v^{-\alpha} dv\Big] \leq  \frac{1}{T_i(t)}\Big[2+ \int_{1}^{T_i(t)}  v^{-1/2} dv\Big]\\
& =  \frac{1}{T_i(t)}\Big[2+ 2 [T_i(t)^{1/2} - 1] \Big] = 2T_i(t)^{1/2} = 2 T_i(t)^{-\alpha \land (1/2)}. \\
\end{align*}

Note that we have
\begin{align}
| \hat{\mu}_{i}(t) -  \mu_i| &= \Big| \hat{\mu}_{i}(t) - \frac{1}{T_i(t)}\sum_{u=1}^{t} \tau_{t-u} \mu_i  \mathbf 1\{I_u = i\} + \frac{1}{T_i(t)}\sum_{u=1}^{t} \tau_{t-u} \mu_i  \mathbf 1\{I_u = i\}-\mu_i\Big|\nonumber\\
&\leq  \Big| \hat{\mu}_{i}(t) - \frac{1}{T_i(t)}\sum_{u=1}^{t} \tau_{t-u} \mu_i  \mathbf 1\{I_u = i\} \Big|+ \Big|\frac{1}{T_i(t)}\sum_{u=1}^{t} \tau_{t-u} \mu_i  \mathbf 1\{I_u = i\}-\mu_i\Big|.\nonumber
\end{align}
Now from the definition of the favorable event $\xi$ in Equation~\eqref{eq:xi} and from the bound of the bias in Equation~\eqref{eq:bia}, it holds on $\xi$ that:
\begin{align}
| \hat{\mu}_{i}(t) -  \mu_i|     
&\leq \left(\frac{2 \log \frac{2} {\delta}} {T_i(t)}\right)^{1/2} + 2 T_i(t)^{- \alpha \land 1/2}.\label{eq:boundmu}
\end{align}



\vspace{5mm}

\section{Proof of Theorems~\ref{thm:pbdep} and~\ref{thm:pbindep}}
\label{ap:regret_bound}

The proof of these theorems relies on the results obtained in Appendix~\ref{ap:confi_inter}. Indeed, we will use the deviation results obtained on the event $\xi$ to bound the number of pulls of sub-optimal arms in order to derive the upper bounds on the regret.

\paragraph{Step 1: upper bound on the number of pulls of sub-optimal arms.}  Let's assume that at some given time $t+1>K$ the algorithm pulls a sub-optimal arm $i$ (such that $\mu_i < \mu^*$). According to the algorithm's rules, we have: $UCB_i(t+1) \geq UCB_{k^{\ast}}(t+1)$. And so on $\xi$, we have because of Equation~\eqref{eq:boundmu}
$$ \mu^{\ast}  \leq  UCB_{k^{\ast}}(t+1) \leq UCB_i(t+1) \leq \mu_i + 2\left(\frac{2 \log \frac{2} {\delta}} {T_i(t)}\right)^{1/2} + 4 T_i(t)^{- \alpha \land 1/2}.$$ 
Rearranging the terms, we have on $\xi$:
\begin{align} 
 \Delta_i = \mu^{\ast} - \mu_i &\leq 2\left(\frac{2 \log \frac{2} {\delta}} {T_i(t)}\right)^{1/2} + 4 T_i(t)^{- \alpha \land 1/2}, \nonumber
\end{align}
which implies that on $\xi$
$$T_i(t) \leq \frac{16 \log(2/\delta)}{\Delta_i^2} \lor \Big(\frac{8}{\Delta_i}\Big)^{-\frac{1}{\alpha} \lor 2} \lor 1,$$
and so on $\xi$, we have for any sub-optimal arm $i$
\begin{align}\label{eq:boundpull}
T_i(T) \leq \frac{16 \log(2/\delta)}{\Delta_i^2} \lor \Big(\frac{8}{\Delta_i}\Big)^{\frac{1}{\alpha} \lor 2} \lor 1.
\end{align}

\paragraph{Step 2: Conclusion.} Consider a sub-optimal arm $i$ (such that $\mu_i < \mu^*$). Combining Equation\eqref{eq:boundpull} with Equation~\eqref{prob:xi}, and since $T_i(T) \leq T$ we have
\begin{align*}
\mathbb{E} [ T_i(T)] &\leq \frac{16 \log(2/\delta)}{\Delta_i^2} \lor \Big(\frac{8}{\Delta_i}\Big)^{\frac{1}{\alpha} \lor 2} \lor 1 + KT^{3}\delta\\
&\leq \frac{16 \log(2KT^3)}{\Delta_i^2} \lor \Big(\frac{8}{\Delta_i}\Big)^{\frac{1}{\alpha} \lor 2} \lor 1 + 1,
\end{align*}
for $\delta \triangleq (KT^3)^{-1}$. Let us now consider some value $\Delta>0$. We have by definition of the regret and with this $\delta\triangleq  (KT^3)^{-1}$ as above:
\begin{align}\label{eq:mix}
\overline{R}_T &\leq \sum_{i: \Delta_i>\Delta} \Delta_i \Big[\frac{16 \log(2KT^3)}{\Delta_i^2} \lor \Big(\frac{8}{\Delta_i}\Big)^{\frac{1}{\alpha} \lor 2} \lor 1 + 1\Big] + \Delta \sum_{i: \Delta_i \leq \Delta} \mathbb{E} [ T_i(T)].
\end{align}
Taking $\Delta = 0$ and recalling that $K \leq T$, we obtain the result of Theorem~\ref{thm:pbdep}, namely
\begin{align*}
\overline{R}_T &\leq \sum _{i: \Delta_i>0}  \Big[\frac{64 \log(2T)}{\Delta_i} \lor \Big(\frac{8}{\Delta_i}\Big)^{\frac{1-\alpha}{\alpha} \lor 1}\Big] + 2K.
\end{align*}
Now not that since the function $\Delta_i \Big[\frac{16 \log(2KT^3)}{\Delta_i^2} \lor \Big(\frac{8}{\Delta_i}\Big)^{\frac{1}{\alpha} \lor 2} \lor 1 + 1\Big]$ increases when $\Delta_i$ decreases, we have for any $\Delta>0$
\begin{align*}
\overline{R}_T &\leq  K \Big[\frac{16 \log(2KT^3)}{\Delta} \lor \Big(\frac{8}{\Delta}\Big)^{\frac{1 - \alpha}{\alpha} \lor 1} \Big] + \Delta T + 2K\\
&\leq K\log(2T) \Big(\frac{64}{\Delta}\Big)^{\frac{1 - \alpha}{\alpha} \lor 1} + \Delta T + 2K.
\end{align*}
And so for $\Delta = \Big(\frac{K\log(2T) 64^{\frac{1 - \alpha}{\alpha} \lor 1}}{T}\Big)^{\alpha \land 1/2}$, we have
\begin{align*}
\overline{R}_T &\leq  2\times 64^{(1-\alpha) \lor 1/2} T^{1 - \alpha \land 1/2}\Big(K\log(2T) \Big)^{\alpha \land 1/2} +2K,
\end{align*}
which concludes the proof of Theorem~\ref{thm:pbindep}.

\section{Proof of Theorem~\ref{thm:lbreg}}

In order to prove this result, we need to show that there exists a bandit problem in the family described in Section~\ref{sec:setting} such that the regret at $T$ is $\Omega (T^{1-\alpha})$. 
To do so, we construct two problems in that family and show that for at least one of them, the regret is larger than the desired quantity. 

First, for ease of notation, define 
\begin{equation}
    \label{eq:p_q}
    p= T^{-\alpha} \quad \text{and} \quad q=\frac{p}{4-2p}.
\end{equation}

We construct two alternative problems with two arms, $K=2$. In both cases, we fix arm 1 such that the distribution of the rewards $\nu_1$ is $\mathcal B(1/2)$ and the distribution of the delays $\mathcal D_1$ is $\delta_0$ (i.e.~a Dirac mass in $0$, no delays). 
\begin{itemize}
    \item Problem A: $\nu_2^{(A)}$ is $\mathcal B(1/2-q)$ and $\mathcal D_2^{(A)}$ is $\delta_0$.
    \item Problem B: $\nu_2^{(B)}$ is $\mathcal B(1/2+q)$ and $\mathcal D_2^{(B)}$ is $(1-p)\delta_0 + p \delta_{T}$.
\end{itemize}
In Problem~A, arm 1 is the best with gap $\Delta=q$ and there are no delays. 
In Problem~B, arm 2 is the best, with gap $\Delta=q$ too, but delays are sending a proportion $p$ of the rewards to $t\geq T$ so they cannot be used for learning. Thus, the conditional distribution of $X_{s,u}| I_s = 2$ is in fact $\mathcal B((1/2 + q)(1-p))$. 
Note that 
\[
\left(\frac{1}{2} + q\right)(1-p) = \frac{1}{2} - \left(\frac{p}{2} -q +pq\right) 
= \frac{1}{2} - \left(\frac{p}{2} - \frac{p}{4-2p} +\frac{p^2}{4-2p}\right) = \frac{1}{2} - \frac{p}{4-2p} = \frac{1}{2} -q.
\]

So the effective mean of arm 2 is $1/2 -q$, meaning that arm 2 has the same distribution in both problems. This implies in particular that
\[
\mathbb E_A T_2(T) = \mathbb E_B T_2(T),
\]
where $\mathbb E_a$ is the expectation in problem $a\in \{A,B\}$. 

And so if we write $\bar R^{(a)}_T$ for the regret in scenario $a \in \{A,B\}$:
$$\max_{a \in \{1,2\}} \bar R_T^{(a)} \geq q \max( T - \mathbb E_B T_2(T),  \mathbb E_A T_2(T)) \geq qT/2.$$
This concludes the proof as $q =  p/(4-2p) \geq p/4 = T^{-\alpha}/4$.

\section{Proof of Theorem~\ref{thm:lbreg2}}

The proof of this theorem relies on the same tools as for Theorem~\ref{thm:lbreg} above, but uses a slightly different reasoning.
Namely, we now fix $\alpha>0$ and we restrict the family of algorithms to those that have a regret smaller than $T^{1-\alpha}/8$ for any stochastic bandit problem satisfying Assumption~\ref{as:alpha} for $\alpha$. We denote this family $\cA_{\alpha}$.

We want to prove that there exists a bandit problem satisfying Assumption~\ref{as:alpha} for some $\alpha' > \alpha$ such that any algorithm in $\cA_{\alpha}$ has regret at least $T^{1-\alpha}/8 > T^{1-\alpha'}/8$. This proves that any algorithm minimax optimal for $\alpha$ is suboptimal for $\alpha'>\alpha$.

Similarly to the previous section, fix $p = T^{\alpha}$ and $q=p/(4-2p)$ as in Eq.~\eqref{eq:p_q}, and consider the two problems,
\begin{itemize}
    \item Problem A: $\nu_2^{(A)}$ is $\mathcal B(1/2-q)$ and $\mathcal D_2^{(A)}$ is $\delta_0$.
    \item Problem B: $\nu_2^{(B)}$ is $\mathcal B(1/2+q)$ and $\mathcal D_2^{(B)}$ is $(1-p)\delta_0 + p \delta_{T}$.
\end{itemize}
Note that, Problem B satisfies Assumption~\ref{as:alpha} for $\alpha$, while Problem A satisfies it for any $\alpha' >0$ so in particular for $\alpha'>\alpha$. 
So for any algorithm in $\cA_{\alpha}$, $q \mathbb E_B [T_1(t)]<3T^{1-\alpha}/16$.
But, as we proved above, because of the delays, the algorithm cannot distinguish both problems and we have 
$\mathbb E_A T_1(T) = \mathbb E_B T_1(T) $, i.e.~the average number of pulls of arm $1$ is the same in both problems. 
Thus, 
\[ 
\mathbb E_A T_1(T) = \mathbb E_B [T_1(t)]<  q^{-1}  T^{1-\alpha}/8.
\]

Using that $q>p/4$ as before, 
\[
\max_{a \in \{A,B\}} \bar R_T^{(a)} \geq  q(T-\mathbb E_A T_1(T))   > \frac{T^{-\alpha}}{4} T-T^{1-\alpha}/8 = T^{1-\alpha}/8.
\]
\section{Proof of Theorem~\ref{thm:pbindepadapt}}

We start by stating the full version of the theorem that 
guarantees the performance of \aDelUCB.
\begin{theorem}
 Let $T> K \geq 1 $ and $\alpha, \underline \alpha, c, \bar\mu >0, $ such that Assumption~\ref{as:alph2} holds. The regret of \aDelUCB is bounded as
\[
\bar R_T \leq   8^{17} \left(\frac{1}{c \bar \mu}\right)^{4}  \left(\frac{2}{c}\right)^{4(\alpha \land 1/2)/\underline{\alpha}} \log(2T)^{13/2} T  (K/T)^{\alpha \land (1/2)}.
\]
\end{theorem}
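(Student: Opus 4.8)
The plan is to rerun the regret analysis of Theorem~\ref{thm:pbindep}, but with the data-driven exponent $\bar\alpha_t$ in place of the true $\alpha$ inside the confidence bound. Everything hinges on showing that, on a favorable event, $\bar\alpha_t$ lands in a narrow window around $\alpha\land\tfrac12$: on one side it must be a valid lower bound, $\bar\alpha_t\leq\alpha\land\tfrac12$, so that the term $2T_i(t)^{-\bar\alpha_t\land 1/2}$ still dominates the genuine bias of $\hat\mu_i$ and $UCB_i(t)$ remains a true upper confidence bound on $\mu_i$; on the other side it must not be much smaller, $\bar\alpha_t\geq(\alpha\land\tfrac12)-C/\log\bar T_t$, so that the algorithm does not over-explore. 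The arithmetic fact that makes this slack harmless is that $T_i(t)^{C/\log\bar T_t}\leq T^{C/\log\bar T_t}$ is bounded by a constant (since $\bar T_t\geq t/K$ grows polynomially in $t$), so perturbing the exponent by $O(1/\log)$ costs only a constant multiplicative factor in the per-arm pull bound.

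First I would set up a favorable event $\xi'$ on which (i) the mean estimators concentrate exactly as in Theorem~\ref{th:boundmu}, uniformly over arms, rounds and numbers of pulls, and (ii) the auxiliary averages $\bar m_{t,D_t}$ and $\bar m_{t,d_t}$ deviate from their conditional means $\mu_{\bar I_t}\tau_{\bar I_t}(D_t)$ and $\mu_{\bar I_t}\tau_{\bar I_t}(d_t)$ by at most $\sqrt{\log(2KT^3)/\bar T_{t-D_t}}$. Both follow from Azuma--Hoeffding and a union bound, and with $\delta=(KT^3)^{-1}$ the event $\xi'$ has probability at least $1-O(1/T)$. On $\xi'$, Assumption~\ref{as:alph2} together with the choice $d_t=\lfloor(c/2)^{1/\underline\alpha}D_t\rfloor$ gives, as in the discussion preceding the algorithm, that $\bar m_{t,D_t}-\bar m_{t,d_t}$ lies in $[\tfrac{c\bar\mu}{2}d_t^{-\alpha},\,d_t^{-\alpha}]$ up to the concentration error.

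Taking $-\log$ of this quantity, dividing by $\log\bar T_t$, and using $\log d_t=\log\bar T_t+O(1)$ yields $\hat\alpha_t=(\alpha\land\tfrac12)+O(1/\log\bar T_t)$: the $\min$ with $\tfrac12$ caps the estimate in the regime $\alpha\geq\tfrac12$, the offset $\log(2/(c\bar\mu))$ and the ratio $\log d_t/\log\bar T_t$ contribute the $O(1/\log\bar T_t)$ terms, and the concentration error contributes another term of the same order. The correction subtracted in the definition of $\bar\alpha_t$, namely $\log\!\big(2^4\sqrt{\log(2KT^3)}/(c\bar\mu)\big)/\log\bar T_t$, is calibrated precisely to swallow both the upper offset and the concentration error, forcing $\bar\alpha_t\leq\alpha\land\tfrac12$ while still leaving $\bar\alpha_t\geq(\alpha\land\tfrac12)-C/\log\bar T_t$. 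I would then replay Step~1 of the proof of Theorem~\ref{thm:pbdep}: on $\xi'$, whenever a suboptimal arm $i$ is pulled at round $t+1$, validity of the UCB ($\bar\alpha_t\leq\alpha\land\tfrac12$) gives $\Delta_i\lesssim\sqrt{\log(KT^3)/T_i(t)}+T_i(t)^{-\bar\alpha_t}$, hence $T_i(T)\lesssim \log(KT^3)/\Delta_i^2\lor(1/\Delta_i)^{1/\bar\alpha_t\lor 2}$. Using $\bar\alpha_t\geq(\alpha\land\tfrac12)-C/\log\bar T_t$ and $T_i\leq T$, the factor $(1/\Delta_i)^{1/\bar\alpha_t}$ is at most a constant of the form $(1/(c\bar\mu))^{O(1)}(2/c)^{O(\alpha/\underline\alpha)}$ times $(1/\Delta_i)^{1/(\alpha\land 1/2)}$. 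Summing over arms, splitting the gaps at a threshold $\Delta$ and optimizing $\Delta=(K\log(2T)/T)^{\alpha\land 1/2}$ exactly as in the proof of Theorem~\ref{thm:pbindep} then produces the stated bound, the explicit polylog power and constants coming from bookkeeping these factors.

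The main obstacle is controlling $\hat\alpha_t$, because of the coupling between the estimator and the algorithm's own decisions. The reference arm $\bar I_t=\argmax_k T_k(t)$ is history-dependent, so I must argue that it is sampled enough ($\bar T_t\geq t/K$, and $\bar T_{t-D_t}\gtrsim\bar T_t$) for the auxiliary concentration to be tight, and that the offset and error both shrink like $1/\log\bar T_t$ uniformly in $t$. A delicate point is keeping $\bar m_{t,D_t}-\bar m_{t,d_t}$ bounded away from $0$ so that its logarithm is well defined and the window holds: the signal $d_t^{-\alpha}\approx\bar T_t^{-\alpha}$ and the deviation $\sqrt{\log/\bar T_t}$ are of comparable size when $\alpha$ is close to $\tfrac12$, so one needs $\bar T_t$ large, and the early rounds where the estimate is unreliable must be absorbed into lower-order (constant or polylog) contributions to the regret. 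Carrying out this coupling argument while tracking the explicit constants $c,\bar\mu,\underline\alpha$ is where the bulk of the effort lies.
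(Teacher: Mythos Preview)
Your proposal is essentially correct and follows the same overall architecture as the paper's proof: establish a two-sided window for $\hat\alpha_t$ (this is the paper's Proposition~\ref{prop:alph}), check that the subtracted correction in $\bar\alpha_t$ exactly cancels the upper offset so that $\bar\alpha_t\leq\alpha\land\tfrac12$ and the UCB remains valid, and then rerun the analysis of Theorem~\ref{thm:pbindep} with the perturbed exponent $\bar\alpha_t\geq(\alpha\land\tfrac12)-U_t$, paying only a multiplicative constant because $T^{U_t}$ is bounded.

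The one concrete device you leave unspecified is how to ``absorb the early rounds where the estimate is unreliable.'' The paper resolves this by a blunt time split at $t=T^{1/2}$: the regret accrued before $T^{1/2}$ is trivially bounded by $T^{1/2}$ (which is lower order), and for $t\geq T^{1/2}$ one uses $\bar T_t\geq t/K\geq T^{1/2}/K$ to get $U_t\leq (2L_\alpha+2\log B)/\log t \leq 4(L_\alpha+\log B)/\log T$, so that $T^{U_t}\leq e^{4L_\alpha}(8\log(2T))^{4\sqrt2}$ is the promised constant-times-polylog factor. Your direct replay of the pull-count bound is in fact more explicit than the paper's somewhat informal invocation of ``Theorem~\ref{thm:pbindep} conditionally on $\xi'$,'' but without a cutoff your inequality $(1/\Delta_i)^{1/\bar\alpha_t}\lesssim\text{const}\cdot(1/\Delta_i)^{1/(\alpha\land 1/2)}$ can fail when $\bar T_t$ is small (indeed $\bar\alpha_t$ may be clipped to $0$). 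Inserting the $T^{1/2}$ split closes that gap and your argument then goes through.
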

Before proving Thereom~\ref{thm:pbindepadapt}, we first provide the following proposition that bounds the error on our estimator of $\alpha$.
\begin{proposition}\label{prop:alph}
Let $\delta \in (0,1)$. There exists an event of probability larger than $1 - 2KT^2\delta$ such that for any $t \leq T,$
\[\alpha\land 1/2-\frac{\log\left(2^3\left(\left(\frac{2}{c}\right)^{(\alpha\land 1/2)/\underline{\alpha}}+B\right)\right) }{\log(\bar T_t)}   \leq \hat \alpha_t \leq  \alpha \land 1/2 + \frac{\log\left(\frac{2^{7/2} B}{c \bar \mu}\right)}{\log \bar T_t}\CommaBin\]
where $B \triangleq B_{\delta} \triangleq \sqrt{2\log(2/\delta)}.$
\end{proposition}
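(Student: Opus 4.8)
The plan is to mirror the confidence-interval argument behind Theorem~\ref{th:boundmu}, applied to the two empirical averages $\bar m_{t,D_t}$ and $\bar m_{t,d_t}$ of arm $\bar I_t$, and then to transfer the resulting two-sided control of their difference $Y_t \triangleq \bar m_{t,D_t}-\bar m_{t,d_t}$ through the map $x\mapsto -\log(x)/\log\bar T_t$ that defines $\hat\alpha_t$. Writing $\beta \triangleq \alpha\land 1/2$ and $B=\sqrt{2\log(2/\delta)}$, I would first set up the favorable event. For a fixed arm $i$, a fixed window $D$, and a fixed number of samples $s$, the re-indexed partial sums $\sum_{u\le s}[\bar C_{i,u}\mathbf 1\{\bar D_{i,u}\le D\}-\mu_i\tau_i(D)]$ form a bounded martingale, so Azuma--Hoeffding gives a $B\sqrt{s}$ deviation with probability $1-\delta$; union-bounding over arms, rounds/sample-sizes and the two families of windows (those coming from $D_t$ and those from $d_t$) exactly as in the proof of Theorem~\ref{th:boundmu} yields an event of probability at least $1-2KT^2\delta$ on which the bound holds simultaneously for all $i,t,s$ and both window choices. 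Evaluating it at the random arm $\bar I_t$ and windows $D_t,d_t$ (legitimate since the event is uniform), and using $\bar T_{t-D_t}\ge \bar T_t-D_t\ge \bar T_t/2$ and likewise $\bar T_{t-d_t}\ge \bar T_t/2$, the triangle inequality gives
\[
\left| Y_t-\mu_{\bar I_t}\big(\tau_{\bar I_t}(D_t)-\tau_{\bar I_t}(d_t)\big)\right|\le \frac{B}{\sqrt{\bar T_{t-D_t}}}+\frac{B}{\sqrt{\bar T_{t-d_t}}}\le \frac{2^{3/2}B}{\sqrt{\bar T_t}}.
\]

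Next I would pin down the deterministic signal $\mu_{\bar I_t}(\tau_{\bar I_t}(D_t)-\tau_{\bar I_t}(d_t))$. Rewriting it as $\mu_{\bar I_t}[(1-\tau_{\bar I_t}(d_t))-(1-\tau_{\bar I_t}(D_t))]$ and invoking Assumption~\ref{as:alph2} together with $\min_k\mu_k>\bar\mu$ and the defining choice $d_t\le (c/2)^{1/\underline\alpha}D_t$ (which, exactly as in the display preceding the statement, forces $c\,d_t^{-\alpha}-D_t^{-\alpha}\ge \tfrac{c}{2}d_t^{-\alpha}$ because $\alpha\ge\underline\alpha$ and $c\le 1$), the signal lies in $[\tfrac{c\bar\mu}{2}d_t^{-\alpha},\,d_t^{-\alpha}]$. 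Combining with the deviation bound sandwiches $Y_t$. I then convert the $d_t$-powers into $\bar T_t$-powers via $d_t\le \bar T_t$ (so $d_t^{-\alpha}\ge \bar T_t^{-\alpha}$) on the one side, and $d_t\ge (c/2)^{1/\underline\alpha}\bar T_t/4$ together with $d_t^{-\alpha}\le d_t^{-\beta}$ on the other (so $d_t^{-\alpha}\le 2(2/c)^{\beta/\underline\alpha}\bar T_t^{-\beta}$). Using $\bar T_t^{-\alpha}\le\bar T_t^{-\beta}$ and $\bar T_t^{-1/2}\le \bar T_t^{-\beta}$ this produces
\[
\tfrac{c\bar\mu}{2}\bar T_t^{-\alpha}-2^{3/2}B\,\bar T_t^{-1/2}\le Y_t\le 2^{3}\big((2/c)^{\beta/\underline\alpha}+B\big)\bar T_t^{-\beta}.
\]

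Finally I would pass to $\hat\alpha_t=\min(-\log Y_t/\log\bar T_t,\,1/2)$, noting $\bar T_t\ge 2$ after initialization so $\log\bar T_t>0$. The lower bound is immediate: the upper estimate on $Y_t$ gives $-\log Y_t/\log\bar T_t\ge \beta-\log(2^3((2/c)^{\beta/\underline\alpha}+B))/\log\bar T_t$, and since this quantity is $\le\beta\le 1/2$ the clip at $1/2$ cannot decrease it, yielding the claimed lower bound. For the upper bound, when $\alpha\ge 1/2$ it is trivial because $\hat\alpha_t\le 1/2=\beta$ while the correction term is nonnegative. When $\alpha<1/2$, I would split on $\bar T_t$: in the regime $\bar T_t^{1/2-\alpha}\ge 2^{7/2}B/(c\bar\mu)$ the lower estimate on $Y_t$ gives $Y_t\ge \tfrac{c\bar\mu}{2^{7/2}B}\bar T_t^{-\alpha}$ (the factor $2^{7/2}$ absorbing the gap between $\tfrac{c\bar\mu}{2}$ and the subtracted noise), hence $-\log Y_t/\log\bar T_t\le \alpha+\log(2^{7/2}B/(c\bar\mu))/\log\bar T_t$; in the complementary regime the very condition $\bar T_t^{1/2-\alpha}<2^{7/2}B/(c\bar\mu)$ rearranges to $\log(2^{7/2}B/(c\bar\mu))/\log\bar T_t>1/2-\alpha$, so the right-hand side of the target already exceeds $1/2\ge\hat\alpha_t$ and the bound holds through the clip.

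The gluing of these two regimes at the threshold $\bar T_t^{1/2-\alpha}=2^{7/2}B/(c\bar\mu)$ is the crux, and is exactly what forces the appearance of $\alpha\land 1/2$: for $\alpha>1/2$ (and for small $\bar T_t$ when $\alpha<1/2$) the signal $d_t^{-\alpha}$ is dominated by the $\bar T_t^{-1/2}$ sampling noise, so the estimator saturates at the clip $1/2$ rather than tracking $\alpha$. I expect the main obstacle to be bookkeeping in this saturated regime rather than any single hard inequality: in particular one must check the degenerate roundings ($D_t=\lfloor\bar T_t/2\rfloor\ge1$ needs $\bar T_t\ge2$, and $d_t\ge1$ needs $\bar T_t\gtrsim(2/c)^{1/\underline\alpha}$, below which $d_t^{-\alpha}$ is not meaningful and one falls back on the clip), and the mild indexing subtlety that the normalisation $\bar T_{t-D}$ must be read as $T_{\bar I_t}(t-D)$ for $\bar m_{t,D}$ to be a genuine empirical mean of arm $\bar I_t$. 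All remaining steps are the routine manipulations of constants and logarithms indicated above.
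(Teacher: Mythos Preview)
Your proposal is correct and follows essentially the same approach as the paper: Azuma--Hoeffding plus a union bound to control $\bar m_{t,D_t}$ and $\bar m_{t,d_t}$, Assumption~\ref{as:alph2} to sandwich the signal $\mu_{\bar I_t}(\tau_{\bar I_t}(D_t)-\tau_{\bar I_t}(d_t))$, conversion of $d_t$-powers to $\bar T_t$-powers, and then the same case split at the threshold $\bar T_t^{1/2-\alpha}=2^{7/2}B/(c\bar\mu)$ (the paper writes it as $\tfrac{c}{4}\bar\mu\,\bar T_t^{-\alpha}\gtrless 2^{3/2}B\,\bar T_t^{-1/2}$, which is the identical condition). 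The only cosmetic difference is that in the signal-dominated regime the paper first obtains the sharper $Y_t\ge \tfrac{c\bar\mu}{4}\bar T_t^{-\alpha}$ and then relaxes to the stated constant, whereas you go directly to $Y_t\ge \tfrac{c\bar\mu}{2^{7/2}B}\bar T_t^{-\alpha}$; both are fine since $B>1$.
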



\begin{proof}[Proof of Proposition~\ref{prop:alph}]
For a lighter notation we set $\bar \mu_t \triangleq \mu_{\bar I_t}$. Similarly to the analysis of the subset of $\xi$ in Equation~\ref{eq:xi}, we can prove that the event 
\[\xi' \triangleq \left\{ \forall t \leq T, \; |\bar m_{t,D_t} -  \bar \mu_t \tau_{\bar I_t}(D_t)| \leq \sqrt{\frac{2\log(2/\delta)}{\bar T(t - D_t)}}\CommaBin \; |\bar m_{t,d_t} -  \bar \mu_t \tau_{\bar I_t}(d_t)| \leq \sqrt{\frac{2\log(2/\delta)}{\bar T(t - d_t)}}\right\}\]
has probability larger than $1-2KT^2\delta$. Let us set 
\[B\triangleq B_{\delta} \triangleq \sqrt{2\log(2/\delta)}.\]
Since $d_t \leq D_t$, by Assumption~\ref{as:alph2} we have that on $\xi'\!,$
\[c\bar \mu_t d_t^{-\alpha} -  \bar \mu_t D_t^{-\alpha}- \frac{2B}{\sqrt{\bar T(t - D_t)}} \leq \bar m_{t,D_t} - \bar m_{t,d_t} \leq     \bar \mu_t d_t^{-\alpha}+\frac{2B}{\sqrt{\bar T(t - D_t)}}\cdot\]
We now chose $d_t \triangleq \left\lfloor\left(\frac{c}{2}\right)^{1/\underline{\alpha}} D_t\right\rfloor,$ for which we have that $c\bar \mu_t d_t^{-\alpha} -  \bar \mu_t D_t^{-\alpha} \geq \frac{c}{2}\bar \mu_t d_t^{-\alpha}$ and therefore on $\xi',$
\[\frac{c}{2}\bar \mu_t d_t^{-\alpha}- \frac{2B}{\sqrt{\bar T(t - D_t)}} \leq \bar m_{t,D_t} - \bar m_{t,d_t} \leq     \bar \mu_t d_t^{-\alpha}+\frac{2B}{\sqrt{\bar T(t - D_t)}}\cdot\]
Here we have that $D_t = \lfloor \bar T_t/2 \rfloor$, and since $\bar T_t \geq 2$, we obtain
$$\bar T_t^{-\alpha} \leq D_t^{-\alpha} \leq 2^{2\alpha}\bar T_t^{-\alpha}.$$
Moreover, since we chose $d_t = \left\lfloor\left(\frac{c}{2}\right)^{1/\underline{\alpha}} D_t\right\rfloor,$ we infer that
\[T_t^{-\alpha} \leq d_t^{-\alpha} \leq 2^{3\alpha}\left(\frac{2}{c}\right)^{\alpha/\underline{\alpha}}\bar T_t^{-\alpha}.\]

Therefore, since $\bar T(t - D_t) \leq \bar T_t \leq 2\bar T(t - D_t),$ we have that on $\xi'$, 
\[\frac{c}{2} \bar \mu \bar T_t^{-\alpha}- 2^{3/2} B \bar T_t^{-1/2} \leq \bar m_{t,D_t} - \bar m_{t,d_t} \leq \left(2^{3(\alpha \land 1/2)}\left(\frac{2}{c}\right)^{(\alpha\land 1/2)/\underline{\alpha}}+2^{3/2}B\right)  \bar T_t^{-\alpha \land 1/2} \triangleq C_\alpha \bar T_t^{-\alpha \land 1/2},\]
where we use the fact that the $\mu_k$ are in $[\bar \mu,1]$. 
\paragraph{First case --- small $\alpha$} First, consider the case where $\frac{c}{4} \bar \mu \bar T_t^{-\alpha} \geq  2^{3/2} B \bar T_t^{-1/2}\!.$ Then we have that on event $\xi'$
\[\frac{c}{4} \bar \mu \bar T_t^{-\alpha} \leq \bar m_{t,D_t} - \bar m_{t,d_t} \leq C_\alpha\bar T_t^{-\alpha \land 1/2},\]
which implies that on $\xi',$
\[-\log\left(\frac{c \bar \mu}{4}\right) + \alpha\log(\bar T_t) \geq -\log(\bar m_{t,D_t} - \bar m_{t,d_t}) \geq -\log\left(C_\alpha\right) + (\alpha \land 1/2) \log(\bar T_t).\]
Therefore, on $\xi',$
\[\alpha-\frac{\log(\frac{c\bar \mu}{4})}{\log(\bar T_t)} \geq -\frac{\log(\bar m_{t,D_t} - \bar m_{t,d_t})}{\log(\bar T_t)}  \geq \alpha\land 1/2-\frac{\log\left(C_\alpha\right)}{\log(\bar T_t)}\CommaBin\]
from which we finally get that on $\xi',$
\begin{equation}\label{eq:fcase}
    \alpha \land 1/2 + \frac{\log(\frac{4}{c\bar \mu})}{\log(\bar T_t)}\geq \hat \alpha_t \geq \alpha\land 1/2-\frac{\log\left(C_\alpha\right)}{\log(\bar T_t)}\cdot
\end{equation}
 Note that while the left hand side of the above inequality \textit{only true under the assumption of the first case} that demands $\frac{c}{4} \bar \mu \bar T_t^{-\alpha} \geq  2^{3/2} B \bar T_t^{-1/2}\!,$ the right hand side is also true when this assumption does not hold since we did not use it.

\paragraph{Second case --- large $\alpha$} Now consider the case where $\frac{c}{4} \bar \mu \bar T_t^{-\alpha} \leq  2^{3/2} B \bar T_t^{-1/2}$. In this case it holds that
\[\bar T_t^{\alpha-1/2} \geq  \frac{c \bar \mu}{2^{7/2} B}  \triangleq  b^{-1},\]
which means that
\[\alpha-1/2 \geq  - \frac{\log\left(b\right)}{\log \bar T_t}\CommaBin\]
and therefore,
\[\alpha \land 1/2 \geq 1/2 - \frac{\log\left(b\right)}{\log \bar T_t}\cdot\]
Now by definition of $ \hat \alpha_t,$ we have
\[\hat \alpha_t\leq 1/2 \leq  \alpha \land 1/2 + \frac{\log\left(b\right)}{\log \bar T_t}\cdot\]
Taking only the right hand side of the first case in Equation~\ref{eq:fcase}, which does \textit{not} use the assumption of the first case, unlike the left hand side (cf.\,the remark under Equation~\ref{eq:fcase}), we have that on $\xi,$
\[ \hat \alpha_t \geq \alpha\land 1/2-\frac{\log\left(C_\alpha\right)}{\log(\bar T_t)}\cdot\]
combining the two sides of the bound, we get that on $\xi,$
\[\alpha \land 1/2 + \frac{\log\left(\frac{2^{7/2} B}{c \bar \mu}\right)}{\log \bar T_t} \geq \hat \alpha_t \geq \alpha\land 1/2-\frac{\log\left(C_\alpha\right) }{\log(\bar T_t)}\cdot\]
Notice that this inequality holds also in the first case (small $\alpha$) since $4/(c\bar \mu) \leq \frac{2^{7/2} B}{c \bar \mu}$. The final result follows immediately since
we have that $C_\alpha \leq 2^3\left(\left(\frac{2}{c}\right)^{(\alpha\land 1/2)/\underline{\alpha}}+B\right)$.
\end{proof}

Now leveraging these concentration bounds obtained on the error of the estimation of $\alpha$ as well as the theoretical results from Theorem~\ref{thm:pbindep} we prove the main result.

\begin{proof}[Proof of Theorem~\ref{thm:pbindepadapt}]
Recall that, from Proposition~\ref{prop:alph} with probability larger than $1 - 2KT^2\delta$ we have that,
\[\alpha \land 1/2 + \frac{\log\left(\frac{2^{7/2} B}{c \bar \mu}\right)}{\log \bar T_t} \geq \hat \alpha_t \geq \alpha\land 1/2-\frac{\log\left(2^3\left(\left(\frac{2}{c}\right)^{(\alpha\land 1/2)/\underline{\alpha}}+B\right)\right) }{\log(\bar T_t)}\CommaBin\] where
$B \triangleq B_{\delta} = \sqrt{2\log(2/\delta)}.$\\
Let $L_\alpha\triangleq \log\left(\frac{2^{7/2}}{c \bar \mu}\right) + \log\left(2^3\left(\left(\frac{2}{c}\right)^{(\alpha\land 1/2)/\underline{\alpha}}+1\right)\right)$. With probability larger than $1 - 2KT^2\delta,$
\[U_t \triangleq \frac{\log\left(\frac{2^{7/2} B}{c \bar \mu}\right) + \log\left(2^3\left(\left(\frac{2}{c}\right)^{(\alpha\land 1/2)/\underline{\alpha}}+B\right)\right)}{\log(\bar T_t)} \leq \frac{L_\alpha + \log(B)}{\log(\bar T_t)}\]
is a high probability lower deviation on the lower bound $\bar \alpha_t$ on $\alpha$. Note that $\bar T_t$ is the number of pulls of the most pulled arm at time t and therefore  $\bar T_t \geq t/K$. Furthermore, note that after the initialisation phase, we have guaranteed that $t \geq 2K$ from which we get
\[U_t \leq \frac{2L_\alpha + 2\log(B)}{\log(t)}\cdot\]

Moreover, by Theorem~\ref{thm:pbindep}, we have that conditionally on the event $\xi$ from Proposition~\ref{prop:alph}, the expected regret coming from the samples pulled \textit{after} round $t$ can be bounded as 
$$128 \sqrt{\log(2T)}T  (K/T)^{\alpha \land (1/2) - U_t }   +2K.$$
The above bound implies that conditional on the event $\xi'$ from Proposition~\ref{prop:alph}, the expected regret due to the samples obtained after round $t = T^{1/2}$ can be bounded as 
$$128 \sqrt{\log(2T)}T  (K/T)^{\alpha \land (1/2) - 4(L_\alpha+\log B)/\log T}    +2K.$$
 Now setting $\delta \triangleq (KT^3)^{-1}$ in the algorithm - where $\delta$ is used to define the event $\xi'$ in  Proposition~\ref{prop:alph} - we have on $\xi'$ that,
$$\bar R_T \leq T^{1/2} + 128 \sqrt{\log(2T)}T  (K/T)^{\alpha \land (1/2) - 4\left(L_\alpha+\sqrt{2}\log\left(2\log(2KT^3)\right)\right)/\log T}  +4K,$$
where the additional regret $T^{1/2}$ comes from the first $T^{1/2}$ samples to the bound computed above, and where the additional regret $2K$ comes from the case when the event $\xi'$ from Proposition~\ref{prop:alph} does not hold - leading to a supplementary term bounded by $\mathbb P((\xi')^c)T \leq  2K$.

Notice that in the expression in the regret bound above, 
\[T^{ 4\left(L_\alpha+\sqrt{2}\log\left(2\log(2KT^3)\right)\right)/\log T} \leq e^{4L_\alpha} \left(8 \log(2T)\right)^{4\sqrt{2}}\leq  8^{12} \left(\frac{1}{c \bar \mu}\right)^{4}  \left(\frac{2}{c}\right)^{4(\alpha \land 1/2)/\underline{\alpha}} \log(2T)^{6}\]
and therefore we can simplify our guarantee to finally obtain
\[\bar R_T \leq T^{1/2} +8^{16} \left(\frac{1}{c \bar \mu}\right)^{4}  \left(\frac{2}{c}\right)^{4(\alpha \land 1/2)/\underline{\alpha}} \log(2T)^{13/2} T  (K/T)^{\alpha \land (1/2)}  +4K.\]

\end{proof}

\begin{thebibliography}{31}
	\providecommand{\natexlab}[1]{#1}
	\providecommand{\url}[1]{\texttt{#1}}
	\expandafter\ifx\csname urlstyle\endcsname\relax
	\providecommand{\doi}[1]{doi: #1}\else
	\providecommand{\doi}{doi: \begingroup \urlstyle{rm}\Url}\fi
	
	\bibitem[Agarwal and Duchi(2011)]{agarwal2011distributed}
	A.~Agarwal and J.~C. Duchi.
	\newblock Distributed delayed stochastic optimization.
	\newblock In \emph{Advances in Neural Information Processing Systems}, pages
	873--881, 2011.
	
	\bibitem[Arya and Yang(2019)]{arya2019randomized}
	S.~Arya and Y.~Yang.
	\newblock Randomized allocation with nonparametric estimation for contextual
	multi-armed bandits with delayed rewards.
	\newblock \emph{arXiv preprint arXiv:1902.00819}, 2019.
	
	\bibitem[Audibert and Bubeck(2009)]{audibert2009minimax}
	J.-Y. Audibert and S.~Bubeck.
	\newblock Minimax policies for bandits games.
	\newblock \emph{COLT 2009}, 2009.
	
	\bibitem[Auer et~al.(2002)Auer, Cesa-Bianchi, and Fischer]{auer2002finite}
	P.~Auer, N.~Cesa-Bianchi, and P.~Fischer.
	\newblock Finite-time analysis of the multiarmed bandit problem.
	\newblock \emph{Machine learning}, 47\penalty0 (2-3):\penalty0 235--256, 2002.
	
	\bibitem[Carpentier and Kim(2015)]{carpentier2015adaptive}
	A.~Carpentier and A.~K. Kim.
	\newblock Adaptive and minimax optimal estimation of the tail coefficient.
	\newblock \emph{Statistica Sinica}, pages 1133--1144, 2015.
	
	\bibitem[Cesa-Bianchi et~al.(2018)Cesa-Bianchi, Gentile, and
	Mansour]{cesa2018nonstochastic}
	N.~Cesa-Bianchi, C.~Gentile, and Y.~Mansour.
	\newblock Nonstochastic bandits with composite anonymous feedback.
	\newblock In \emph{Conference On Learning Theory}, pages 750--773, 2018.
	
	\bibitem[Chapelle(2014)]{chapelle2014modeling}
	O.~Chapelle.
	\newblock Modeling delayed feedback in display advertising.
	\newblock In \emph{Proceedings of the 20th ACM SIGKDD international conference
		on Knowledge discovery and data mining}, pages 1097--1105. ACM, 2014.
	
	\bibitem[Chapelle and Li(2011)]{chapelle2011empirical}
	O.~Chapelle and L.~Li.
	\newblock An empirical evaluation of thompson sampling.
	\newblock In \emph{Advances in neural information processing systems}, pages
	2249--2257, 2011.
	
	\bibitem[Cover(2011)]{cover2011universal}
	T.~M. Cover.
	\newblock Universal portfolios.
	\newblock In \emph{The Kelly Capital Growth Investment Criterion: Theory and
		Practice}, pages 181--209. World Scientific, 2011.
	
	\bibitem[De~Haan and Ferreira(2007)]{de2007extreme}
	L.~De~Haan and A.~Ferreira.
	\newblock \emph{Extreme value theory: an introduction}.
	\newblock Springer Science \& Business Media, 2007.
	
	\bibitem[Dudik et~al.(2011)Dudik, Hsu, Kale, Karampatziakis, Langford, Reyzin,
	and Zhang]{dudik2011efficient}
	M.~Dudik, D.~Hsu, S.~Kale, N.~Karampatziakis, J.~Langford, L.~Reyzin, and
	T.~Zhang.
	\newblock Efficient optimal learning for contextual bandits.
	\newblock \emph{arXiv preprint arXiv:1106.2369}, 2011.
	
	\bibitem[Garcia et~al.(1966)Garcia, Ervin, and Koelling]{garcia1966learning}
	J.~Garcia, F.~R. Ervin, and R.~A. Koelling.
	\newblock Learning with prolonged delay of reinforcement.
	\newblock \emph{Psychonomic Science}, 5\penalty0 (3):\penalty0 121--122, 1966.
	
	\bibitem[Garg and Akash(2019)]{garg2019stochastic}
	S.~Garg and A.~K. Akash.
	\newblock Stochastic bandits with delayed composite anonymous feedback.
	\newblock \emph{arXiv preprint arXiv:1910.01161}, 2019.
	
	\bibitem[Joulani et~al.(2013)Joulani, Gyorgy, and
	Szepesv{\'a}ri]{joulani2013online}
	P.~Joulani, A.~Gyorgy, and C.~Szepesv{\'a}ri.
	\newblock Online learning under delayed feedback.
	\newblock In \emph{International Conference on Machine Learning}, pages
	1453--1461, 2013.
	
	\bibitem[Joulani et~al.(2016)Joulani, Gyorgy, and
	Szepesv{\'a}ri]{joulani2016delay}
	P.~Joulani, A.~Gyorgy, and C.~Szepesv{\'a}ri.
	\newblock Delay-tolerant online convex optimization: Unified analysis and
	adaptive-gradient algorithms.
	\newblock In \emph{Thirtieth AAAI Conference on Artificial Intelligence}, 2016.
	
	\bibitem[Lai and Robbins(1985)]{lai1985asymptotically}
	T.~L. Lai and H.~Robbins.
	\newblock {Asymptotically efficient adaptive allocation rules}.
	\newblock \emph{Advances in Applied Mathematics}, 6\penalty0 (1):\penalty0
	4--22, 1985.
	
	\bibitem[Langford et~al.(2009)Langford, Smola, and Zinkevich]{langford2009slow}
	J.~Langford, A.~J. Smola, and M.~Zinkevich.
	\newblock Slow learners are fast.
	\newblock In \emph{Proceedings of the 22nd International Conference on Neural
		Information Processing Systems}, pages 2331--2339, 2009.
	
	\bibitem[Lattimore and Szepesv{\'a}ri(2018)]{lattimore2018bandit}
	T.~Lattimore and C.~Szepesv{\'a}ri.
	\newblock Bandit algorithms.
	\newblock \emph{preprint}, 2018.
	
	\bibitem[Lattimore and Szepesv{\'{a}}ri(2019)]{lattimore2019bandit}
	T.~Lattimore and C.~Szepesv{\'{a}}ri.
	\newblock \emph{{Bandit algorithms}}.
	\newblock 2019.
	\newblock URL \url{http://downloads.tor-lattimore.com/book.pdf}.
	
	\bibitem[Mandel et~al.(2015)Mandel, Liu, Brunskill, and
	Popovi{\'c}]{mandel2015queue}
	T.~Mandel, Y.-E. Liu, E.~Brunskill, and Z.~Popovi{\'c}.
	\newblock The queue method: Handling delay, heuristics, prior data, and
	evaluation in bandits.
	\newblock In \emph{Twenty-Ninth AAAI Conference on Artificial Intelligence},
	2015.
	
	\bibitem[Mann et~al.(2018)Mann, Gowal, Jiang, Hu, Lakshminarayanan, and
	Gyorgy]{mann2018learning}
	T.~A. Mann, S.~Gowal, R.~Jiang, H.~Hu, B.~Lakshminarayanan, and A.~Gyorgy.
	\newblock Learning from delayed outcomes with intermediate observations.
	\newblock \emph{arXiv preprint arXiv:1807.09387}, 2018.
	
	\bibitem[McMahan and Streeter(2014)]{mcmahan2014delay}
	B.~McMahan and M.~Streeter.
	\newblock Delay-tolerant algorithms for asynchronous distributed online
	learning.
	\newblock In \emph{Advances in Neural Information Processing Systems}, pages
	2915--2923, 2014.
	
	\bibitem[Pike-Burke et~al.(2018)Pike-Burke, Agrawal, Szepesvari, and
	Grunewalder]{pike2018bandits}
	C.~Pike-Burke, S.~Agrawal, C.~Szepesvari, and S.~Grunewalder.
	\newblock Bandits with delayed, aggregated anonymous feedback.
	\newblock In \emph{International Conference on Machine Learning}, pages
	4102--4110, 2018.
	
	\bibitem[Quanrud and Khashabi(2015)]{quanrud2015online}
	K.~Quanrud and D.~Khashabi.
	\newblock Online learning with adversarial delays.
	\newblock In \emph{Advances in neural information processing systems}, pages
	1270--1278, 2015.
	
	\bibitem[Sra et~al.(2015)Sra, Yu, Li, and Smola]{sra2015adadelay}
	S.~Sra, A.~W. Yu, M.~Li, and A.~J. Smola.
	\newblock Adadelay: Delay adaptive distributed stochastic convex optimization.
	\newblock \emph{arXiv preprint arXiv:1508.05003}, 2015.
	
	\bibitem[Thune et~al.(2019)Thune, Cesa-Bianchi, and
	Seldin]{thune2019nonstochastic}
	T.~S. Thune, N.~Cesa-Bianchi, and Y.~Seldin.
	\newblock Nonstochastic multiarmed bandits with unrestricted delays.
	\newblock \emph{arXiv preprint arXiv:1906.00670}, 2019.
	
	\bibitem[Vernade et~al.(2017)Vernade, Capp{\'e}, and
	Perchet]{vernade2017stochastic}
	C.~Vernade, O.~Capp{\'e}, and V.~Perchet.
	\newblock Stochastic bandit models for delayed conversions.
	\newblock \emph{arXiv preprint arXiv:1706.09186}, 2017.
	
	\bibitem[Vernade et~al.(2018)Vernade, Carpentier, Zappella, Ermis, and
	Brueckner]{vernade2018contextual}
	C.~Vernade, A.~Carpentier, G.~Zappella, B.~Ermis, and M.~Brueckner.
	\newblock Contextual bandits under delayed feedback.
	\newblock \emph{arXiv preprint arXiv:1807.02089}, 2018.
	
	\bibitem[Weinberger and Ordentlich(2002)]{weinberger2002delayed}
	M.~J. Weinberger and E.~Ordentlich.
	\newblock On delayed prediction of individual sequences.
	\newblock \emph{IEEE Transactions on Information Theory}, 48\penalty0
	(7):\penalty0 1959--1976, 2002.
	
	\bibitem[Yoshikawa and Imai(2018)]{yoshikawa2018nonparametric}
	Y.~Yoshikawa and Y.~Imai.
	\newblock A nonparametric delayed feedback model for conversion rate
	prediction.
	\newblock \emph{arXiv preprint arXiv:1802.00255}, 2018.
	
	\bibitem[Zhou et~al.(2019)Zhou, Xu, and Blanchet]{zhou2019learning}
	Z.~Zhou, R.~Xu, and J.~Blanchet.
	\newblock Learning in generalized linear contextual bandits with stochastic
	delays.
	\newblock In \emph{Advances in Neural Information Processing Systems 32}, pages
	5198--5209, 2019.
	
\end{thebibliography}
\end{document}